\documentclass{article}




\usepackage[final]{neurips_2025}


\usepackage[utf8]{inputenc} 
\usepackage[T1]{fontenc}    
\usepackage{hyperref}       
\usepackage{url}            
\usepackage{booktabs}       
\usepackage{amsfonts}       
\usepackage{nicefrac}       
\usepackage{microtype}      
\usepackage{xcolor}         

\usepackage{subcaption}
\usepackage{graphicx}
\usepackage{wrapfig}
\usepackage{amsmath,amssymb,amsthm}
\usepackage{thmtools}  
\usepackage{bm,bbm}
\usepackage{accents}
\usepackage[linesnumbered,ruled,vlined]{algorithm2e}
\usepackage[colorinlistoftodos]{todonotes}
\usepackage{minitoc}  

\newcommand{\E}{\mathbb{E}}
\newcommand{\R}{\mathbb{R}}
\newcommand{\bR}{\mathbb{R}}
\newcommand{\bN}{\mathbb{N}}
\newcommand{\bE}{\mathbb{E}}
\newcommand{\bP}{\mathbb{P}}


\newcommand{\cA}{\mathcal{A}}
\newcommand{\cB}{\mathcal{B}}

\newcommand{\cD}{\mathcal{D}}
\newcommand{\cE}{\mathcal{E}}
\newcommand{\cF}{\mathcal{F}}

\newcommand{\cL}{\mathcal{L}}
\newcommand{\cM}{\mathcal{M}}
\newcommand{\cN}{\mathcal{N}}
\newcommand{\cO}{\mathcal{O}}
\newcommand{\cP}{\mathcal{P}}

\newcommand{\cS}{\mathcal{S}}
\newcommand{\cT}{\mathcal{T}}

\newcommand{\cX}{\mathcal{X}}

\newcommand{\cZ}{\mathcal{Z}}

\newcommand{\thetahat}{\hat{\theta}}
\newcommand{\thetatilde}{\tilde{\theta}}


\DeclareMathOperator*{\argmax}{arg\,max}
\DeclareMathOperator*{\argmin}{arg\,min}

\newcommand{\eqdef}{:=}
\newcommand\numberthis{\addtocounter{equation}{1}\tag{\theequation}}

\newcommand{\ind}{\mathbbm{1}}

\newcommand{\tr}{\operatorname{tr}}

\DeclareMathOperator{\dom}{\operatorname{dom}}

\newcommand{\diff}{\mathop{}\!\mathrm{d}}

\theoremstyle{plain}
\newtheorem{theorem}{Theorem}[section]
\newtheorem{proposition}[theorem]{Proposition}
\newtheorem{lemma}[theorem]{Lemma}

\theoremstyle{definition}

\newtheorem{assumption}[theorem]{Assumption}
\theoremstyle{remark}
\newtheorem{remark}[theorem]{Remark}

\newcommand{\bc}[1]{\left\{{#1}\right\}}
\newcommand{\br}[1]{\left({#1}\right)}
\newcommand{\bs}[1]{\left[{#1}\right]}
\newcommand{\abs}[1]{\left| {#1} \right|}

\newcommand{\norm}[1]{\left\lVert#1\right\rVert}
\newcommand\ip[2]{\langle #1, #2 \rangle}

\newcommand{\Pp}{\mathbb{P}}
\newcommand{\Ppi}{\Pp_{\pi}}

\newcommand{\effH}{H_{\gamma}}


\newcommand{\pihat}{\hat{\pi}}

\newcommand{\ts}{t_{\operatorname{stop}}}

\newcommand{\ie}{\textit{i.e.} }
\newcommand{\eg}{\textit{e.g.} }

\newcommand{\lalgo}{\texttt{LRPO-OD-Regret}}

\newcommand{\oracle}{\texttt{A}^{\texttt{PAC}}_{\texttt{RL}}}
\newcommand{\subopt}{\mathsf{SubOpt}}
\DeclareRobustCommand{\bigO}{\mathcal{O}}
\DeclareRobustCommand{\bigOt}{\widetilde{\mathcal{O}}}

\title{Efficient Preference-Based Reinforcement Learning: Randomized Exploration Meets Experimental Design}

%

\author{%
  Andreas Schlaginhaufen\thanks{Correspondence to \texttt{andreas.schlaginhaufen@epfl.ch}.}\\
  \text{SYCAMORE, EPFL}
  \And
  Reda Ouhamma\\
  \text{SYCAMORE, EPFL}
  \And
  Maryam Kamgarpour\\
  \text{SYCAMORE, EPFL}
}

\begin{document}
\doparttoc 
\faketableofcontents 
\part{} 

\maketitle

\begin{abstract}
    We study reinforcement learning from human feedback in general Markov decision processes, where agents learn from trajectory-level preference comparisons. A central challenge in this setting is to design algorithms that select informative preference queries to identify the underlying reward while ensuring theoretical guarantees. We propose a meta-algorithm based on randomized exploration, which avoids the computational challenges associated with optimistic approaches and remains tractable. We establish both regret and last-iterate guarantees under mild reinforcement learning oracle assumptions. To improve query complexity, we introduce and analyze an improved algorithm that collects batches of trajectory pairs and applies optimal experimental design to select informative comparison queries. The batch structure also enables parallelization of preference queries, which is relevant in practical deployment as feedback can be gathered concurrently. Empirical evaluation confirms that the proposed method is competitive with reward-based reinforcement learning while requiring a small number of preference queries.
\end{abstract}

\section{Introduction}
\label{sec:introduction}
Reinforcement learning (RL) is a fundamental paradigm in machine learning, where agents learn to make sequential decisions by interacting with an environment to maximize cumulative rewards \citep{barto2021reinforcement}. RL has enabled advances in domains such as game play \citep{silver2017mastering}, robotics \citep{todorov2012mujoco}, or autonomous driving \citep{lu2023imitation}.
However, the practicality of RL is hindered by the challenge of designing rewards: crafting a reward function that aligns with human objectives is often difficult, and a misspecified reward function can lead to suboptimal or unsafe behavior \citep{amodei2016concrete, hadfield2017inverse}. This motivates the development of principled alternatives to manual reward design. 

Rather than relying on manually specified reward functions, reinforcement learning from human feedback (RLHF) guides learning through preference feedback: at each step, a human oracle compares trajectories and indicates which is preferable \citep{christiano2017deep}. This preference signal is often much easier to provide than engineering a reward function \citep{pereira2019online, lee2023aligning}. RLHF has proven to be effective in robotics \citep{jain2013learning} and, more recently, finetuning of large language models \citep{ziegler2019fine, stiennon2020learning, rafailov2023direct}. This highlights the practical relevance of RLHF compared to reward-based learning.


Despite its empirical success, the theoretical foundations of RLHF are still in development. Existing works first studied the simpler setting of dueling bandits. In this context, the learner selects pairs of actions and observes noisy preference feedback \citep{yue2012k,komiyama2015regret}. Classical algorithms for regret minimization in this setting include approaches based on zeroth-order optimization \citep{yue2009interactively} or the principle of optimism \citep{ailon2014reducing}. A key challenge in this setting is reducing the number of preference queries. For this purpose, several recent works propose query selection strategies for dueling bandits \citep{das2024active, liu2024dual, scheid2024optimal, mukherjee2024optimal}, often hinging on optimal experimental design mechanisms \citep{pukelsheim2006optimal}. 
However, such approaches are usually limited to finite-armed bandits, where the resulting optimization problems can be solved efficiently.

In the online RL setting, the theory of RLHF has received increasing attention, with several works establishing either regret or probably approximately correct (PAC) guarantees. PAC-RL methods aim to identify a near-optimal policy with high probability \citep{xu2020preference, novoseller2020dueling, zhu2023principled}, while regret-based approaches provide bounds on the cumulative reward during learning \citep{pacchiano2021dueling, chen2022human, wu2023making}. Similar to dueling bandits, a central challenge in RLHF is to actively select informative trajectory comparisons to drive learning. In RL, however, this active-learning problem presents additional difficulties: First, the learner cannot freely choose arbitrary state-action pairs or trajectories, but must reach them through exploration \citep{wagenmaker2022beyond}. Second, many existing approaches with guarantees \citep{pacchiano2021dueling, zhan2024provablereward} rely on maximizing an exploration bonus involving a norm of state distributions -- a problem which is computationally intractable even in tabular settings \citep{efroni2021reinforcement}.\looseness-1

To sidestep these challenges, another line of work focuses on RLHF with offline data. In this setting, learning proceeds over a fixed pre-collected dataset of trajectory preferences \citep{zhu2023principled, zhan2023provable}. Although this offline paradigm avoids the need for online exploration and active query selection, it depends critically on having access to sufficiently diverse and informative preference data a priori \citep{rashidinejad2021bridging, xie2021policy, zanette2021provable, zanette2023realizability}  — a requirement that can be difficult to meet in practice. Hence, this merely shifts the exploration and active learning challenges to the data collection phase.

Despite progress on statistical guarantees in RLHF, a central challenge remains open: designing tractable algorithms for active preference query selection that reduce the workload of human annotators. In existing approaches, a human must provide feedback at every round, which is impractical in real-world applications. Our goal in this work is to develop RLHF algorithms that are computationally efficient, reduce the demand for human feedback, and actively select informative queries. Some recent work has made progress on computational tractability. For instance, \citet{wu2023making} propose a randomized exploration algorithm with regret guarantees limited to linear dynamics, and \citet{dwaracherla2024efficient} show empirically that randomized exploration is efficient for finetuning of language models. In parallel, \citet{wang2023rlhf} introduce a general reduction from RLHF to standard RL and establishes PAC-style guarantees under RL oracle access. However, neither approach addresses the open challenges of reducing feedback requirements or enabling active query selection. \looseness-1

\paragraph{Contributions} In this work, we focus on reinforcement learning from human feedback (RLHF) and develop meta-algorithms that reduce the RLHF problem to standard RL by leveraging existing RL algorithms as subroutines.
Leveraging randomized exploration for tractable and efficient preference query selection, we provide both online algorithms with regret guarantees and a preference-free algorithm with PAC-style guarantees under RL-oracle assumptions. Our contributions are as follows:\looseness-1
\begin{itemize}
    \item We propose two meta-algorithms for RLHF using RL oracles: Algorithm~\ref{alg:regret}, optimized for regret minimization, and Algorithm~\ref{alg:explore}, which performs preference-free exploration and defers preference collection to a single batch at the end. For these methods, we establish regret and PAC-style guarantees, respectively, holding in general MDPs.
    \item We present a second meta-algorithm for regret minimization (Algorithm~\ref{alg:lazy_thompson_sampling}) with better scalability and query efficiency thanks to: Lazy updates, inspired by linear bandits \citep{abbasi2011improved}, which enables parallelization of the preference oracle calls; Greedy optimal design, which selects informative preference queries and improves sample efficiency.\looseness-1
    \item We provide empirical results showing that: Our algorithms are implementable, competitive with reward-based RL, and substantially outperform a baseline that relies solely on entropy-based exploration; Algorithm~\ref{alg:lazy_thompson_sampling} achieves comparable performance to Algorithm~\ref{alg:regret} while significantly reducing the query complexity.\footnote{The code is openly accessible at \url{https://github.com/andrschl/isaac_rlhf}.}
\end{itemize}

\section{Preliminaries}
\label{sec:preliminaries}

\paragraph{Notation} Let $\bN$ and $\R$ denote the sets of natural and real numbers, respectively.
We write $\norm{\cdot}$ for the Euclidean norm and $\langle \cdot,\cdot\rangle$ for the standard inner product in $\R^d$. Moreover, for a positive definite matrix $A\in\R^{d\times d}$, we denote $\norm{x}_A := \sqrt{\ip{x}{Ax}}$ for the Mahalanobis norm. Furthermore, we denote the closed Euclidean ball of radius $a>0$ by $\cB^d(a)\subset\R^d$, and for a compact subset $\cX\subset\R^n$, we denote the set of all probability measures supported on $\cX$ by $\Delta_{\cX}$. Finally, we use the standard notation $\cO(n)$ and $\Omega(n)$ for asymptotic upper and lower bounds, as well as $\tilde\cO(n)=\cO(n\operatorname{polylog(n)})$ for suppressing polylogarithmic terms.

\paragraph{Setting} We consider an infinite-horizon\footnote{Our results extend directly to the finite-horizon setting as well.} Markov decision process (MDP) $\cM=\{\cS, \cA, \nu_0, P, r, \gamma\}$ with state and action spaces $\cS \subseteq \bR^n$ and $\cA \subset \bR^m$, respectively, initial state distribution $s_0\sim\nu_0$, transition law $s_{h+1}\sim P(\cdot|s_h, a_h)$, and discount rate $\gamma\in(0,1)$. We assume a linear reward model $r_{\theta^*}(s,a):=\ip{\theta^*}{\phi(s,a)}$, where $\norm{\theta^*}\leq B$ and $\phi:\cS\times\cA\to\R^d$ is a continuous feature mapping such that $\max_{s, a} \norm{\phi(s,a)}\leq L$. We denote the set of all trajectories as $\cT \eqdef {(\cS\times\cA)}^\infty$, and the distribution over $\cT$ induced by a stationary Markov policy $\pi:\cS\to\Delta_{\cA}$ as $\Ppi$. For a trajectory $\tau = (s_0,a_0,s_1, \hdots)\in \cT$, we denote the discounted sum of features by $\phi(\tau):= \sum_{h = 0}^{\infty}\gamma^h\phi(s_h, a_h)$ and the feature expectation of a policy $\pi$ by $\phi(\pi):= \E_{\tau\sim\Ppi}\bs{\phi(\tau)}$. Furthermore, given a reward parameter $\theta$, we denote the value of a policy $\pi$ by $V_{\theta}^{\pi}:= \E_{\tau\sim\Ppi}\bs{\sum_{h=0}^{\infty}\gamma^h r_{\theta}(s_h, a_h)}=\ip{\theta}{\phi(\pi)}$ and the optimal value by $V_{\theta}^*:=\max_{\pi}V_{\theta}^{\pi}$.

\paragraph{Interaction protocol} For each round \(t=1,\ldots, T\) of RLHF, a learner, the MDP, and a preference oracle interact as follows. The learner selects two policies $\pi_t$ and $\pi'_t$, and executes them to obtain two trajectories $\tau_t\sim \bP_{\pi_t}$ and $\tau_t'\sim \bP_{\pi_t'}$. Subsequently, the learner may query the preference oracle, which returns a binary label $y_t = \ind\br{\tau_t \succ \tau'_t} \in \bc{0,1}$. The label equals one if the trajectory $\tau_t$ is preferred over $\tau'_t$, denoted as $\tau_t \succ \tau'_t$, and zero otherwise. Each such interaction is one RLHF round.

\paragraph{Preference model} We consider a stochastic preference model characterized by a preference function $\cP:\cT\times \cT\to [0,1]$, assigning to each pair of trajectories $\tau, \tau'\in\cT$ the probability $\cP(\tau\succ\tau')$ of preferring $\tau$ to $\tau'$. We make the following assumption about the preference model.
\begin{assumption}[Bradley–Terry model]
    The preference function $\cP$ satisfies for all $\tau,\tau'\in \cT$
    \begin{equation}\label{eq:bt_model}
        \cP(\tau\succ \tau') = \sigma\br{\ip{\theta^*}{\phi(\tau) - \phi(\tau')}},
    \end{equation}
    where $\sigma(x) = 1/(1+e^{-x})$ denotes the sigmoid function.
\end{assumption}
This preference model is a special case of the Plackett-Luce model \citep{plackett1975analysis,luce1959individual}, and is commonly used in the dueling bandit setting as well as the RLHF framework \citep{yue2012k, christiano2017deep, ouyang2022training}. It captures that the probability of preferring $\tau$ over $\tau'$ is increasing in the difference between their values $\ip{\theta^*}{\phi(\tau) - \phi(\tau')}$.

\begin{remark}
    In practice, we cannot compare trajectories of infinite length. Fortunately, many environments terminate in finite time, and otherwise one may truncate each trajectory at horizon $H=\cO(\log_{\gamma}(\varepsilon))$, introducing at most an $\varepsilon$ error in value estimates (see \eg, \citep{schlaginhaufen2024towards}). For simplicity, however, we omit this truncation step in our presentation.
\end{remark}

\paragraph{Regret} To assess the learner's online performance, we consider the cumulative regret
\begin{equation*}
    R(T) = \sum_{t=1}^T \frac{\br{V_{\theta^*}^*-V_{\theta^*}^{\pi_t}} + (V_{\theta^*}^*-V_{\theta^*}^{\pi'_t})}{2} = \frac{1}{2}\sum_{t=1}^T \br{2V_{\theta^*}^* - V_{\theta^*}^{\pi_t} - V_{\theta^*}^{\pi_t'}}.
\end{equation*}
Cumulative regret has been widely adopted in the RL and RLHF literature \citep{abbasi2011improved, zanette2020frequentist, wang2023rlhf, zhan2024provablereward}. However, cumulative regret doesn't provide us with a guarantee of the last iterate's suboptimality. As a second metric, we therefore also consider the suboptimality of an output policy.

\paragraph{Suboptimality} The suboptimality of an output policy $\pihat$ is defined by
\begin{equation*}
    \subopt(\hat{\pi}) := V_{\theta^*}^*-V_{\theta^*}^{\hat{\pi}}.
\end{equation*}
Suboptimality has previously been considered as a performance metric for offline RLHF \citep{zhu2023principled}, contextual bandits \citep{das2024active}, and online RLHF \citep{wang2023rlhf}. In the following, we propose a meta-algorithm that features two variants: one with theoretical guarantees on cumulative regret, and another specifically ensuring a bound on last-iterate suboptimality.

\vspace{-0.1cm}
\section{Randomized Preference Optimization}
\label{sec:randomized_preference_optimization}
\vspace{-0.1cm}

\vspace{-0.1cm}

\subsection{Algorithm}
\vspace{-0.1cm}
Both algorithms share the same key ingredients: (i) estimating the reward parameter from preference feedback using maximum likelihood estimation, (ii) sampling reward parameters from a Gaussian distribution, reminiscent of linear Thompson sampling \citep{pmlr-v54-abeille17a}, and (iii) leveraging an RL oracle to find an approximately optimal policy for the sampled reward parameter. In Algorithm~\ref{alg:regret}, we apply these steps iteratively and, at each round, query the preference oracle by comparing a trajectory from the new policy to one from the previous policy to minimize regret. In contrast, Algorithm~\ref{alg:explore} performs preference-free exploration and defers all preference queries to a single batch at the end.
\vspace{-0.1cm}

\paragraph{Maximum likelihood estimation} Considering our preference model \eqref{eq:bt_model}, a standard approach for estimating the reward parameter $\theta^*$ is via maximum likelihood estimation. Given a pair of trajectories $\tau_k=(s_{h,k}, a_{h,k})_{h=0}^\infty$ and $\tau_k'=(s'_{h,k}, a'_{h,k})_{h=0}^\infty$ we consider the design points $x_k := \phi(\tau_k) - \phi(\tau'_k) = \sum_{h=0}^{\infty} \gamma^h (\phi(s_{h, k}, a_{h,k})-\phi(s'_{h,k}, a'_{h,k}))$ and the preference labels $y_k = \ind(\tau_k\succ\tau'_k)$. In round $t$, the preference dataset is $\cD_t = \{(x_k, y_k)\}_{k=1}^{t-1}$ and the corresponding (constrained) maximum likelihood estimator (MLE) is given by $\thetahat_t = \argmin_{\norm{\theta}\leq B} \cL_{\cD_t}(\theta)$, where
\begin{equation}\label{eq:mle}
    \cL_{\cD_t}(\theta):= - \sum_{(x,y) \in \cD_t} \bs{y \log\sigma(\ip{\theta}{x}) + (1-y)\log\sigma(-\ip{\theta}{x})},
\end{equation}
is the negative log-likelihood of the Bradley-Terry model \eqref{eq:bt_model}. The loss function \eqref{eq:mle} is the familiar logistic loss from logistic regression \citep{shalev2014understanding}. In particular, it is a convex problem that can be solved efficiently using standard methods such as LBFGS \citep{liu1989limited}. Moreover, we have the following time-uniform confidence result.
\begin{restatable}{lemma}{confidence}\label{lem:confidence_set}
    Let $\lambda\geq 0$ and define the design matrix at time $t$ given by $V_t = \lambda I + \sum_{k=1}^{t-1}x_k x_k^\top$. Then, with probability $1-\delta$, for all $t\in\bN$, the true reward parameter $\theta^*$ is contained in the ellipsoid
    \begin{equation*}
        \cE_t(\delta) := \bc{\theta: \norm{\theta - \thetahat_t}_{V_t}^2 \leq \beta_t(\delta)^2:= \cO\br{\kappa\bs{\log\br{\frac{1}{\delta}} + d \log \br{\frac{t-1}{d}}} + \lambda}}.
    \end{equation*}
    Here, $\kappa:=\max_{\theta\in\cB^d(B), x \in \cB^d(2L\effH)}1/\dot{\sigma}\br{\ip{\theta}{x}}$ denotes the Lipschitz constant of the inverse sigmoid function, and $\effH = (1-\gamma)^{-1}$ the effective horizon of the MDP.
\end{restatable}
The above lemma hinges on a result for likelihood-ratio confidence sets by \citet{lee2024unified}. The proof and the precise constants are deferred to Appendix~\ref{app:sec:technical_results}. 

\begin{remark}\label{rem:kappa}
Compared to the standard analysis of stochastic linear bandits \citet{abbasi2011improved}, our parameter $\beta_t$ includes an additional factor of $\sqrt{\kappa}$, which arises naturally due to preference-based feedback. This result improves upon the bound provided by \citet{zhu2023principled}, which incurs a larger factor of $\kappa$ instead of $\sqrt{\kappa}$. While the $\sqrt{\kappa}$ factor can theoretically be avoided by constructing confidence sets using the Hessian of the negative log-likelihood $\cL_{\cD_t}$ \citep{lee2024unified, das2024active}, it reappears in the regret bounds as shown in \citep{das2024active}. We adopt confidence sets based on $V_t$, as this facilitates preference-free exploration, and both $V_t$ and its inverse can be efficiently updated via rank-one operations, unlike Hessian-based approaches.
\end{remark}
\vspace{-0.1cm}

\paragraph{Randomized exploration} Many approaches to regret minimization and pure exploration in RLHF rely on maximizing an exploration bonus of the form $\norm{\phi(\pi_t)-\phi(\pi_t')}_{V_t^{-1}}$ or {\small$\E_{\tau\sim\bP_{\pi_t}, \tau'\sim \bP_{\pi_t'}}\norm{\phi(\tau)-\phi(\tau')}_{V_t^{-1}}^2$}. Although such methods yield provable guarantees for regret \citep{pacchiano2021dueling} or last-iterate suboptimality \citep{das2024active}, they are computationally intractable in RL settings (see Appendix~\ref{app:sec:opt_approaches}). To address this, we adopt a randomized exploration scheme inspired by Thompson sampling algorithms for linear bandits. In line with \citet{pmlr-v54-abeille17a}, we sample the reward parameter from an inflated version of the confidence set defined in Lemma \ref{lem:confidence_set}, which produces a computationally efficient alternative to optimism-based approaches. Furthermore, we also show that this randomized strategy extends to the pure exploration setting.
\vspace{-0.1cm}

\paragraph{RL oracle} With the objective of a meta-algorithm, we assume access to the following RL oracle. 
\begin{assumption}[PAC-RL oracle]
We assume access to an $(\varepsilon, \delta)$-PAC oracle, $\oracle$, for the RL problem. That is, a polynomial-time algorithm that produces for every $\varepsilon>0$, $\delta\in (0,1)$, and $\theta\in\R^d$ a policy $\pi = \oracle(\theta, \varepsilon, \delta)$ such that with probability at least $1-\delta$ we have $V^*_{\theta} - V_{\theta}^{\pi}\leq \varepsilon$.
\end{assumption}
This assumption of a PAC-RL oracle is satisfied in several settings, including tabular and linear MDPs \citep{dann2019policy,menard2021fast,al2021navigating,wagenmaker2022beyond, he2021uniform}. Moreover, it is relatively mild compared to stronger oracles considered in the RLHF literature \citep{zhan2024provablereward}, such as reward-free algorithms \citep{wang2020reward, kaufmann2021adaptive, menard2021fast}. In practice, common choices for $\oracle$ are policy optimization methods such as proximal policy optimization (PPO) \citep{schulman2017proximal} or soft actor critic \citep{haarnoja2018soft}, which have shown strong empirical performance in continuous control and large-scale applications such as training large language models.
\vspace{-0.1cm}

\paragraph{Algorithm statement} Our algorithm randomized preference optimization (\texttt{RPO}) presented below comes in two variants: (i) \texttt{RPO-Regret} (Algorithm~\ref{alg:regret}) which balances exploration and exploitation for regret minimization, and (ii) \texttt{RPO-Explore} (Algorithm~\ref{alg:explore}) which performs preference-free exploration and collects a single batch of preferences at the end. In \texttt{RPO-Regret}, we sample in each round a reward parameter from a confidence set (see Lemma~\ref{lem:confidence_set}) inflated by {\small$\sqrt{d}$}, compute the policy $\pi_t$ via the RL oracle, and update the reward estimate by maximum likelihood using the newly collected preference. In contrast, \texttt{RPO-Explore} samples reward parameters centered at zero, stores trajectory pairs during exploration, and collects a single batch of preferences at the end.
\vspace{-0.1cm}

\begin{algorithm}[H]
\DontPrintSemicolon
\label{alg:regret}
    \SetAlgoLined
    \KwIn{$T\in \bN,\ \delta\in (0,1),\ \lambda>0$}
    \textbf{Initialize:} $\varepsilon = 1/\sqrt{T}$;\ $\delta'= \delta/5$;\ $V_1= \lambda I$;\ $\cD_1=\emptyset$;\ $\hat\theta_1=0$;\ and choose $\pi_0$.
    
    \For{$t=1,2,\dots,T$}{
        $\thetatilde_t \sim \cN\big(\thetahat_t,\beta_t(\delta')^2 V_t^{-1}\big)$ 
        \tcp*[r]{Reward sampling}
        $\pi_t= \oracle\big(\thetatilde_t, \varepsilon, \delta'/T\big), \pi'_t=\pi_{t-1}$\tcp*[r]{RL update}
        $\tau_t\sim \bP_{\pi_t}, \tau'_t\sim \bP_{\pi'_t}$\\
        $x_t = \phi(\tau_t) - \phi(\tau_t')$, $V_{t+1} = V_t + x_t x_t^\top$ \\
        $y_t = \ind(\tau_t \succ \tau'_t)$, $\cD_{t+1} = \cD_t \cup \bc{\br{x_t, y_t}}$ \tcp*[r]{Preference feedback}
        $\hat\theta_{t+1} \in \argmin_{\norm{\theta}\leq B} \cL_{\cD_{t+1}}(\theta)$ \tcp*[r]{Reward estimation}
    }
    \caption{\texttt{RPO–Regret} (online preference learning for regret minimization)}
\end{algorithm}
\vspace{-0.1cm}

\begin{algorithm}[H]
\DontPrintSemicolon
\label{alg:explore}
    \SetAlgoLined
    \KwIn{$T\in \bN,\ \delta\in (0,1),\ \lambda>0$}
    \textbf{Initialize:} $\varepsilon = 1/\sqrt{T}$;\ $\delta'= \delta/5$;\ $V_1= \lambda I$;\ $\cB = \cD =\emptyset$;\ and choose $\pi_0$.

    \For{$t=1,2,\dots,T$}{
    $\thetatilde_t \sim \cN\big(0,V_t^{-1}\big)$ 
    \tcp*[r]{Reward sampling}
    $\pi_t= \oracle\big(\thetatilde_t, \varepsilon, \delta'/T\big), \ \pi'_t=\pi_{t-1}$ \tcp*[r]{RL update}
    $\tau_t\sim \bP_{\pi_t}, \ \tau'_t\sim \bP_{\pi'_t}$\\
    $x_t = \phi(\tau_t) - \phi(\tau_t')$, $V_{t+1} = V_t + x_t x_t^\top$\\
    $\cB = \cB \cup \{(\tau_t,\tau'_t,x_t)\}$ \tcp*[r]{Defer preference feedback}
    }
    \ForEach{$(\tau,\tau',x)\in\cB$}{
        $y = \ind(\tau \succ \tau')$;\ $\cD = \cD \cup \{(x, y)\}$\tcp*[r]{Preference feedback}
    }
    \KwOut{Policy $\pihat = \oracle\br{\thetahat, \varepsilon, \delta'}$ with $\hat\theta \in \argmin_{\norm{\theta}\leq B} \cL_{\cD}(\theta)$ \tcp*[r]{One MLE at end}}
    \caption{\texttt{RPO–Explore} (preference-free exploration and batched reward estimation)}
\end{algorithm}
\vspace{-0.1cm}

\subsection{Theoretical results} 
We analyze the regret of \hyperref[alg:regret]{\texttt{RPO-Regret}} and the suboptimality of the output policy $\pihat$ of \hyperref[alg:explore]{\texttt{RPO-Explore}}. 

\subsubsection{Regret analysis} 

We show that \texttt{RPO-Regret} incurs sublinear regret with high probability.
\begin{restatable}{theorem}{regret}
\label{thm:regret_thompson_sampling}
    For any $\delta\in (0,1)$ and $T\in \bN$, Algorithm \ref{alg:regret} satisfies, with probability at least $1-\delta$,
    \begin{equation*}
        R(T) = \bigO\left(\sqrt{\kappa d^3 T \log(dT/\delta)^3} \right).
    \end{equation*}
\end{restatable}
The regret bound of Theorem \ref{thm:regret_thompson_sampling} matches the best existing bounds for algorithms with randomized exploration in reinforcement learning, see \citep{efroni2021reinforcement,ouhamma2023bilinear}. In addition, due to learning from preferences, we have an extra $\sqrt{\kappa}$ factor (see Remark~\ref{rem:kappa}), which is in line with other recent work on RLHF \citep{wu2023making, das2024active}. 

\paragraph{Comparison with prior work} For episodic tabular MDPs \citet{pacchiano2021dueling} prove a regret bound of $\bigOt(\kappa d\sqrt{T})$. Similarly, \citet{chen2022human} considers episodic linear MDPs and derives a regret bound of $\bigOt(d\sqrt{HT})$, avoiding dependence on $\kappa$ by assuming a linear preference model. However, these approaches rely on a type of optimism which is computationally intractable in this setting (see Appendix~\ref{app:sec:opt_approaches}). Similar to us, \citet{wu2023making} avoid this challenge by resorting to randomized exploration and proves a $\bigOt(d^3\sqrt{\kappa T})$ regret bound for linear MDPs. Compared to \citet{wu2023making}, our analysis improves the dependence on dimensionality from $d^3$ to $d^{3/2}$ and avoids the need for truncation techniques on the value function. Furthermore, our settings differ in two key points: First, we assume access to an RL oracle without restricting the class of MDPs, whereas \citet{wu2023making} considers linear MDPs. Second, their approach is model-based, while Algorithm \ref{alg:regret} is oracle-based and can accommodate both model-based and model-free implementations.

\paragraph{Proof idea} The full proof of Theorem~\ref{thm:regret_thompson_sampling} is provided in Appendix~\ref{app:proof_regret_thompson_sampling}. Analogous to the analysis of linear Thompson sampling \citep{pmlr-v54-abeille17a}, the main idea is to control the regret by showing that randomized exploration ensures a constant probability of optimism. However, compared to the linear bandit analysis, our setting comes with additional challenges: First, due to preference-based learning we require a different regret decomposition accounting for the reference policy. Second, as we observe preference feedback on trajectories rather than policies, we need to apply Freedman's inequality (see Lemma \ref{lem:freedman}) to control the deviation between expected and observed features. Lastly, as we assume a PAC RL oracle -- in place of an exact maximization oracle -- we need to carefully track the resulting approximation error.

\subsubsection{Suboptimality gap} 
As \hyperref[alg:explore]{\texttt{RPO-Explore}} collects no preferences during exploration, it may incur linear regret as the policies $\pi_t$ can be highly suboptimal. However, Theorem~\ref{thm:last_iterate} below shows that the final output policy is $\tilde\cO(1/\sqrt{T})$-optimal.
\begin{restatable}{theorem}{lastiterate}\label{thm:last_iterate}
    For any $\delta\in (0,1)$ and $T\in \bN$, Algorithm \ref{alg:explore} satisfies, with probability at least $1-\delta$,
\begin{equation*}
    \subopt\br{\pihat} = \cO\br{\sqrt{\frac{\kappa d^3}{T}\log\br{\frac{dT}{\delta}}^3}}.
\end{equation*}
In other words, we need $\tilde\cO(\kappa d^3 / \varepsilon^2)$ iterations to output an $\varepsilon$-optimal policy with high probability.
\end{restatable}
\vspace{-0.1cm}

Except for the extra {\small$\sqrt{d}$} dependency, which is inherent to approaches based on randomized exploration,\footnote{Recently, \citet{abeille2025and} showed that, in certain cases, the factor {\small$\sqrt{d}$} can be removed by an improved analysis. Their assumptions do not hold in our setting, so whether {\small$\sqrt{d}$} can be avoided remains open.} we match the last iterate guarantee proposed by \citet{das2024active} for a contextual linear bandit setting, but with an algorithm that (i) collects a single batch of preferences and (ii) remains tractable in a full RL setting with trajectory-level feedback, given the PAC RL oracle is tractable.
\vspace{-0.1cm}

\paragraph{Comparison with prior work} Algorithm~\ref{alg:explore} is reminiscent of reward-free RL algorithms \citep{wang2020reward}, but in a preference-based setting. To our knowledge, Algorithm~\ref{alg:explore} is the first tractable algorithm to perform efficient preference-free exploration with trajectory-level preferences, and the first -- across preference- or reward-based RL -- to do so via randomized exploration. Few other works provide suboptimality guarantees in preference-based RL, but require preference-feedback at every round. \citet{wang2023rlhf} propose a meta-algorithm interfacing with a PAC-RL oracle that outputs an $\varepsilon$-optimal policy after $\bigOt(\kappa^2 d^3 / \varepsilon^2)$ queries. A different approach by \citet{zhan2024provablereward} leverages optimal design to prove a bound of $\bigOt((|S|^2 |A| d+\kappa^2 d^2)/\varepsilon^2)$, but their method relies on an intractable maximization oracle. In comparison, Algorithm~\ref{alg:explore} achieves a bound of $\bigOt(\kappa d^3 / \varepsilon^2)$, improving the dependence on $\kappa$ over both prior results. The extra {\small$\sqrt{d}$} compared to \citet{zhan2024provablereward} is expected for randomized (rather than optimistic) exploration \citep{pmlr-v54-abeille17a}. Finally, note that an online-to-batch conversion yields similar suboptimality bounds for Algorithm~\ref{alg:regret}, but would require preferences at every round \citep{menard2021fast}.
\vspace{-0.1cm}

\paragraph{Proof idea} The proof of Theorem~\ref{thm:last_iterate}, presented in Appendix \ref{app:proof_last_iterate}, builds on \citet{das2024active}'s suboptimality analysis for the contextual bandits setting. However, to sidestep the intractability of maximizing an exploration bonus over policies, we leverage randomized exploration \citep{pmlr-v54-abeille17a} to ensure a constant probability of optimism. This allows us to derive a bound on the output policy’s suboptimality that mirrors the regret bound, without needing additional assumptions.
\vspace{-0.1cm}

\subsection{Practical limitations} While Algorithm \ref{alg:regret} is tractable and efficient, it presents certain limitations. Issuing preference queries at each round (line 7) is impractical, due to the need for continuous feedback, and requesting a label for all trajectory pairs can be expensive and inefficient, as many comparisons may be uninformative. Moreover, the large regret of Algorithm~\ref{alg:explore} may be undesirable for certain applications. The next section introduces a refined regret-minimization algorithm that addresses these issues by decoupling trajectory collection from query selection and querying only the most informative comparisons.

\vspace{-0.1cm}
\section{A practical algorithm with efficient query selection}
\label{sec:practical_algorithm}
\vspace{-0.1cm}

We present Algorithm \ref{alg:lazy_thompson_sampling}, an improved method for preference collection and active query selection.
\vspace{-0.1cm}

\subsection{Algorithm}
\vspace{-0.1cm}

As discussed earlier, we design Algorithm \ref{alg:lazy_thompson_sampling} by using lazy updates to collect a batch of trajectory pairs, then applying optimal design to select the informative queries from the batch.
\begin{algorithm}[H]
\label{alg:lazy_thompson_sampling}
\DontPrintSemicolon
    \SetAlgoLined
    \KwIn{$T\in\bN$, $\delta\in (0,1)$, $\lambda>1$, $C>0$}
    
    \textbf{Initialize:} $\varepsilon = 1/\sqrt{T}$;\ $\delta'= \delta/5$;\ $V_1= W_1 = \lambda I$;\ $\cD_1 = \cD =\emptyset$;\ $\ts=1$ ;\ $\thetahat_1=0\, \text{;}\; \pi'$.
    
    \For{$t=1,2,\dots,T$}{
        \If{$\det(W_t) > (1+C) \det(V_{\ts})$}{
        $\cD_{\text{opt}}, V_t = \operatorname{D-OptDes}(\cD, V_{\ts}, W_t)$ \tcp*[r]{Optimal design}
        $\cD_t = \cD_{\ts}$\\
        \For{$(\tau, \tau') \in \cD_{\text{opt}}$}{
        $\cD_t = \cD_{t} \cup \{(x, y)\},  x = \phi(\tau)-\phi(\tau'), y = \ind(\tau \succ \tau')$ \tcp*[r]{Preferences}
        }
        $\hat\theta_{t} \in \argmin_{\norm{\theta}\leq B} \cL_{\cD_{t}}(\theta)$ \tcp*[r]{Reward estimation}
        $\ts=t$, $\cD = \emptyset$, $\pi' = \oracle\br{\thetahat_t, \varepsilon, \delta'/T}$ \\
        }
        $\thetatilde_t \sim \cN(\thetahat_{\ts},\beta_{\ts}(\delta')^2 V_{\ts}^{-1})$\tcp*[r]{Reward sampling}
        $\pi_t= \oracle\br{\thetatilde_t, \varepsilon, \delta'/T}$\tcp*[r]{Update policy with RL}
        $\cD = \cD \cup \{(\tau_t, \tau'_t)\}$, $x_t = \phi(\tau_t) - \phi(\tau_t')$ with $\tau_t\sim \bP_{\pi_t}, \tau'_t\sim \bP_{\pi'}$\\
        If $t = \ts$, then $W_{t+1} = V_t + x_t x_t^\top$, else: $W_{t+1} = W_t + x_t x_t^\top$\\
    }
    \caption{\lalgo~ (lazy randomized preference optimization with optimal design)}
\end{algorithm}
\vspace{-0.1cm}
\paragraph{Lazy updates} We use an idea from \citet{abbasi2011improved} to collect many trajectory pairs without querying the preference oracle. The modification compared to Algorithm \ref{alg:regret} is collecting trajectories without updating the MLE $\hat{\theta}_{\ts}$ until the information gain, represented by $\det(V_t)$, increases by a multiplicative constant; see line 4 of Algorithm \ref{alg:lazy_thompson_sampling}. We show that this procedure limits the number of batches to $\bigO(\log(T))$. In other words, the average (over batches) size of a given batch is of order $\bigO(T/\log(T))$. A key advantage of this lazy update structure is that the preference queries (line 7) can be collected in parallel across all trajectory pairs within a batch. This significantly reduces the workload of the preference oracle, \eg a human annotator, by eliminating the need for round-by-round feedback, and the algorithm no longer pauses at each timestep to wait for preference labels.\looseness-1

\paragraph{D-Optimal design} To select informative preference queries from the collected trajectories above, we leverage tools from optimal experimental design. Specifically, we apply an approximate D-optimal design criterion to each collected batch of trajectory pairs; see Appendix \ref{ap:background_optimal_design} for background on D-optimal design. Given the current matrix $V_{\ts}$ and the set of candidate trajectory pairs $\cD$, we use a greedy algorithm to solve the following maximization problem:
\begin{equation*}
    \max_{\{n_x\}} \log\det\br{V_{\ts}+\sum_{x \in \cD} n_x x x^\top} \quad \text{ subject to } \sum_{x \in \cD} n_x = |\cD|, \:\: n_x \in \bN.
\end{equation*}
Due to the submodularity of the $\log\det$ function for $\lambda$ greater than one, the greedy procedure of Algorithm \ref{alg:greedy_optimal_design} achieves an $(1-1/e)$-approximation to the optimal solution; see \citep{nemhauser1978analysis,krause2008near} and Appendix \ref{ap:background_optimal_design}.

Another key feature of Algorithm \ref{alg:greedy_optimal_design} is its early stopping rule: the while loop terminates early if $\det(V)$ exceeds the determinant of the naive design $W$; if this never happens we simply return $W$. When this early termination is satisfied, Algorithm \ref{alg:lazy_thompson_sampling} requires fewer preferences than Algorithm \ref{alg:regret}. Since the optimal design maximizes the information gain (as measured by $\det(V)$), this termination condition is expected to be satisfied frequently in practice.

\begin{algorithm}[H]
    \label{alg:greedy_optimal_design}
    \SetAlgoLined
    \KwIn{Dataset $\cD$, previous design matrix $V$, current design matrix $W$ including $\cD$.}
    Initialize dataset $\cD_{\text{opt}} = \emptyset$\\
    \While{$\det(V) < \det(W)$ and $|\cD_{\text{opt}} | \le |\cD|$}{
    $(\tau, \tau') = \argmax_{(\tau, \tau') \in \cD} \det(V+(\phi(\tau)-\phi(\tau'))(\phi(\tau)-\phi(\tau'))^\top)$ \\
    $V = V + x x^\top, \text{ where } x = \phi(\tau)-\phi(\tau'); \: \cD_{\text{opt}} = \cD_{\text{opt}} \cup \{(\tau, \tau')\}$ \\
    }
        \KwOut{$(\cD_{\mathrm{opt}}, V)$ if $\det(V) \geq \det(W)$, else $(\cD, W)$}
    \caption{Greedy D-Optimal Design}
\end{algorithm}

\subsection{Theoretical result}

We now provide our high probability regret bound for Algorithm \ref{alg:lazy_thompson_sampling}.

\begin{restatable}{theorem}{lazy}
    \label{thm:regret_lazy_thompson_sampling}
    Instantiating Algorithm~\ref{alg:lazy_thompson_sampling} with $C>0$ and $\lambda\geq 4 \effH^2 L^2$, it holds for any $\delta\in (0,1)$ and $T\in \bN$, with probability at least $1-\delta$ that
    \begin{equation*}
        R(T) \le \bigO\left(\sqrt{(1+C) \kappa d^3 T \log(d T/\delta)^3} \right).
    \end{equation*}
    In addition, the number of times the condition of line 4 holds is at most $\frac{d}{\log(1+C)}\log\br{1 + \frac{T(2L\effH)^2}{d\lambda}}$. Therefore, the size of the batches is on average of order $\tilde{\bigO}(T/(d\log(T)))$.
\end{restatable}

Compared to Theorem \ref{thm:regret_thompson_sampling}, the above regret bound increases only by constant factors. Regarding the number of preference queries, \citet{sekhari2023contextual} shows that at least $\Omega(T)$ queries are required to achieve $\bigO(\sqrt{T})$ regret in the worst case. However, optimal design may lead to significantly fewer queries in favorable instances, as demonstrated in our experiments, while preserving the worst-case guarantees.

\paragraph{Proof idea} 
We briefly outline the main idea of the proof of Theorem~\ref{thm:regret_lazy_thompson_sampling}; the full argument is deferred to Appendix~\ref{app:proof_regret_lazy_thompson_sampling}. The central observation is that similarly as for the standard lazy update analysis \citep{abbasi2011improved}, the regret only increases by a constant factor $(1+C)$ given that the optimal design subroutine ensures that $\det V_{\ts} \geq \det W_{\ts}$ and $|\cD_{\text{opt}}| \leq |\cD|$.

\section{Experiments}
\label{sec:experiments}
We first validate our theoretical results on regret minimization in a tabular gridworld environment, where our RL oracle assumption provably holds, and then compare Algorithm~\ref{alg:regret} and \ref{alg:lazy_thompson_sampling} on more challenging continuous control tasks. The validation of Algorithm~\ref{alg:explore} is deferred to Appendix~\ref{app:sec:experiments}.

\subsection{Tabular environment}
\begin{wrapfigure}{r}{0.48\textwidth}
  \centering
  \includegraphics[width=0.48\textwidth]{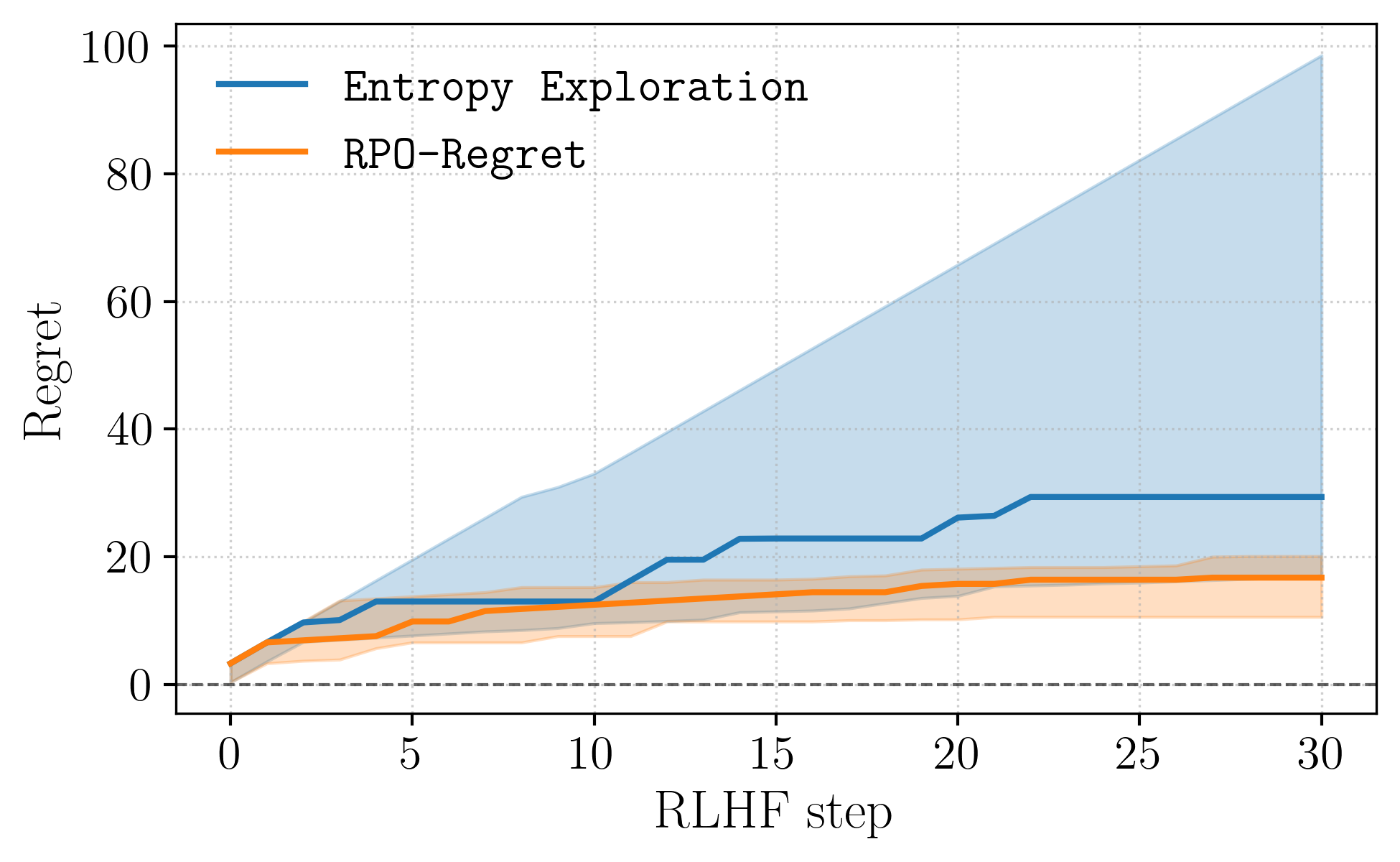}
    \caption{We compare the regret of \hyperref[alg:regret]{\texttt{RPO-Regret}} (orange, Algorithm~\ref{alg:regret}) against a baseline with entropy exploration (blue). The solid lines indicate the median and the shaded areas the 0.2 and 0.8 quantiles, across 20 independent runs. The regret is computed with respect to the ground truth reward parameter $\theta^*$.}\label{fig:gridworld_regret}
\end{wrapfigure}

We consider a $6\times 6$ gridworld environment with deterministic transitions and $4$ actions (up, down, left, right). The agent starts in the center and receives reward $0.5$ if it reaches one of two boundary states. The reward features are one-hot features of six boundary states -- including the two goal states (see Figure~\ref{fig:gridworld_environment} for an illustration of the environment). We compare \hyperref[alg:regret]{\texttt{RPO-Regret}} (Algorithm~\ref{alg:regret}) against an RLHF baseline that explores purely through entropy regularization, using synthetic preferences generated from the ground truth reward parameter $\theta^*$. We compute optimal policies using soft value iteration \citep{haarnoja2017reinforcement}, and to reduce variance in the reward estimate, we sample $100$ trajectories from $\pi_t$ and $\pi_t'$ in each RLHF round. 

As shown in Figure~\ref{fig:gridworld_regret}, \hyperref[alg:regret]{\texttt{RPO-Regret}} attains considerably lower regret with less variance across runs than the baseline. These results underscore that entropy exploration, does not necessarily guarantee low regret in MDPs.

\subsection{Continuous control}
\begin{figure}[!b]
\centering
\begin{subfigure}[t]{0.48\textwidth}
    \centering
    \includegraphics[width=\textwidth]{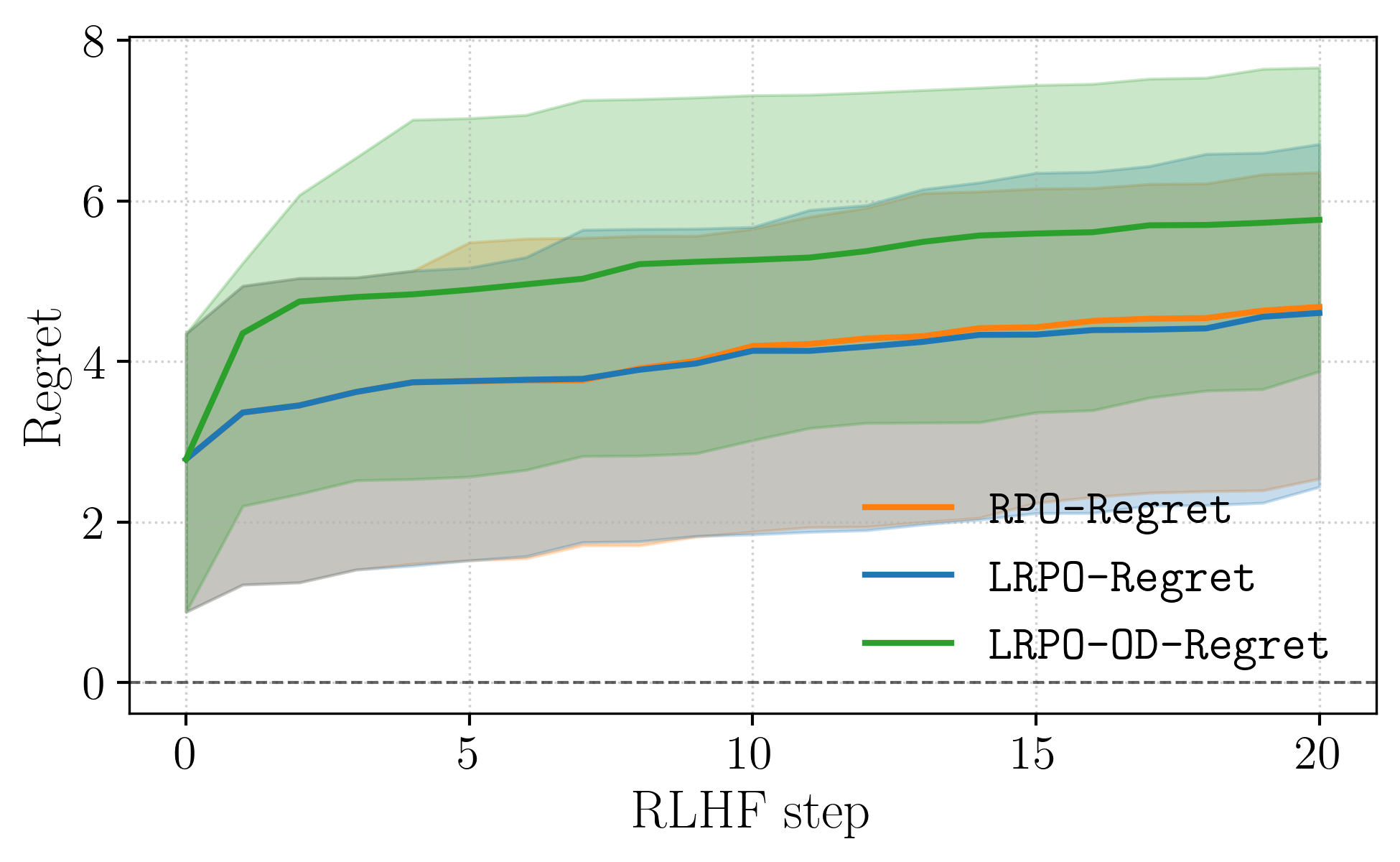}
    \label{fig:ground_truth_reward}
    
    \vspace{-0.5cm}
    \hspace{0.5cm}(a)
\end{subfigure}
\hfill
\begin{subfigure}[t]{0.48\textwidth}
    \includegraphics[width=\textwidth]{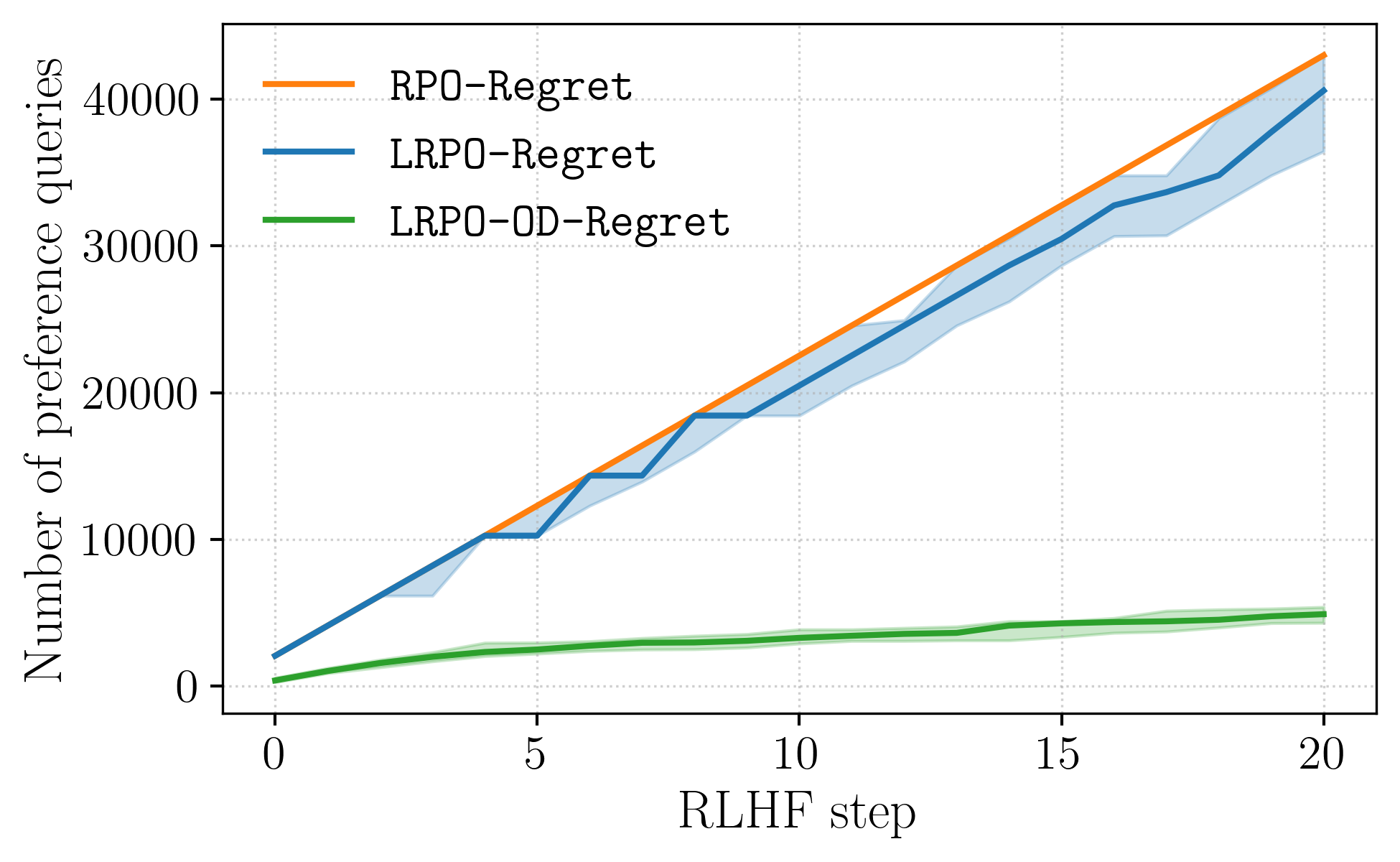}
    \label{fig:num_queries}
    
    \vspace{-0.5cm}
    \hspace{3.3cm}(b)
\end{subfigure}
\caption{
Comparison of regret minimization algorithms in terms of (a) the cumulative regret (estimated from samples against an RL policy trained with $\theta^*$) and (b) number of preference queries performed. In particular, we compare \hyperref[alg:regret]{\texttt{RPO-Regret}} (orange, Algorithm~\ref{alg:regret}) with its lazy versions \hyperref[alg:lazy_thompson_sampling]{\texttt{LRPO-Regret}} and \hyperref[alg:lazy_thompson_sampling]{\lalgo}~(blue \& green). Here, \hyperref[alg:lazy_thompson_sampling]{\texttt{LRPO-Regret}} and \hyperref[alg:lazy_thompson_sampling]{\lalgo}~refer to Algorithm~\ref{alg:lazy_thompson_sampling} without and with optimal design subroutine. The solid lines indicate the median and the shaded areas the 0.2 and 0.8 quantiles, across 20 independent runs. }
\label{fig:experiments}
\end{figure}
We validate our theoretical results on regret minimization (Theorems~\ref{thm:regret_thompson_sampling} and \ref{thm:regret_lazy_thompson_sampling}) on the \texttt{Isaac-Cartpole-v0} environment from Nvidia Isaac Lab \citep{mittal2023orbit}. In this task the goal is to balance a pole on a cart by applying left or right forces, preventing the pole from falling. We compare \hyperref[alg:regret]{\texttt{RPO-Regret}} with its lazy variants (\hyperref[alg:lazy_thompson_sampling]{\texttt{LRPO-Regret}}~without optimal design and \hyperref[alg:lazy_thompson_sampling]{\lalgo}~with optimal design). To simulate human preferences, we generate synthetic preferences using the built-in task-specific reward function, which is a linear combination of $5$ reward terms (see Appendix~\ref{app:sec:experiments}). Furthermore, we adopt PPO \citep{schulman2017proximal} as our RL oracle using 30 PPO steps per iteration of Algorithm~\ref{alg:regret} and \ref{alg:lazy_thompson_sampling}. Again, we sample $100$ trajectories from $\pi_t$ and $\pi_t'$ in each RLHF round.

Figure~\ref{fig:experiments} shows that for all three algorithms the regret slope flattens after a few RLHF rounds, demonstrating sublinear regret and performance competitive with RL using ground truth rewards. Although \lalgo~suffers slightly higher regret initially, it quickly reaches the same performance as \texttt{RPO-Regret} and \texttt{LRPO-Regret}, while reducing the number of preference queries considerably. This highlights that, despite theoretical worst-case lower bounds, the number of preference queries can be considerably reduced in practice by selecting informative queries with optimal design. Additional experimental results and further implementation and evaluation details are provided in Appendix~\ref{app:sec:experiments}. 

\section{Conclusion}
\label{sec:conclusion}
We presented two simple meta-algorithms for reinforcement learning from human feedback (RLHF) that combine an RL oracle and randomized exploration, achieving complementary guarantees: regret bounds for Algorithm~\ref{alg:regret} and PAC-style guarantees for Algorithm~\ref{alg:explore}. Algorithm~\ref{alg:explore}, to our knowledge, is the first tractable method to perform preference-free exploration with trajectory-level preferences. Building on this framework, we introduced Algorithm~\ref{alg:lazy_thompson_sampling}, a practical regret-minimization algorithm that combines lazy updates to enable parallelization with optimal design to reduce query complexity, while maintaining regret guarantees. Empirically, our approach is competitive with reward-based RL while requiring significantly fewer preference queries. Overall, our contributions advance the state of RLHF by combining strong theoretical guarantees with practical algorithm design, improving efficiency and broadening applicability to real-world scenarios.


Our work opens several directions for future research. First, we adopt the Bradley-Terry model for preference generation, which may not fully capture the complexity of real-world human feedback. Extending the framework to richer preference models is an important direction. Second, our approach relies on an RL oracle at every RLHF step, which may be computationally demanding. While using a reward-free algorithm as an RL oracle is theoretically efficient, practical RL implementations are typically based on policy optimization, which is not a reward-free algorithm, and often entails high sample complexity. Thus, it remains open whether we could require fewer RL oracle calls or whether reward-free oracles can be successfully implemented. Finally, our experimental evaluation is limited to tabular settings and simple robotic control tasks with synthetic feedback. Assessing performance on more complex tasks and with real human or LLM-generated feedback would offer a stronger test of the method's practical applicability.

\paragraph{Acknowledgments}
Andreas Schlaginhaufen is funded by a PhD fellowship from
the Swiss Data Science Center.

\bibliographystyle{plainnat}
\bibliography{main}

\newpage

\newpage
\appendix
\addcontentsline{toc}{section}{Appendix} 
\part{Appendix} 
\parttoc 
\newpage

\section{Proof of Theorem \ref{thm:regret_thompson_sampling}}
\label{app:proof_regret_thompson_sampling}
\regret*
\begin{proof}
    The proof proceeds in four steps. We first define a set of good events of concentration of parameters and sums of trajectory features, we show that they are satisfied with high probability. Second, we decompose the regret into two terms: a pessimism term and an estimation error term. We then show that the pessimism term is controlled by establishing a constant probability of optimism, and the estimation error term is controlled by the estimation error of the maximum likelihood estimator.
    
    Throughout the proof, we work with the two filtrations $\cF_{t-1}=\sigma(x_1,y_1,\hdots, x_{t-1}, y_{t-1})$ and $\cF_{t-1}^\theta=\sigma(\cF_{t-1}, \thetatilde_t)$. In particular, both $\thetahat_t$ and $V_t$ are $\cF_{t-1}$ measurable. Therefore, $\thetatilde_t$ follows a Gaussian distribution given $\cF_{t-1}$, \ie $\thetatilde_t \mid \cF_{t-1} \sim \cN(\thetahat_t, \beta_t^2 V_t^{-1})$, while it is fully determined by $\cF_{t-1}^{\theta}$.

\textit{Step 1 (good events)}: Recall that in Algorithm~\ref{alg:regret} we set $\varepsilon=1/\sqrt{T}$ and $\delta'=\delta/5$. We define the following high probability events. 
\begin{enumerate}
    \item Let $\delta'' = \delta'/T$ and $c(\delta''):= \sqrt{2d\log(2d/\delta'')} = \sqrt{2d\log(10dT/\delta)}$. Consider the inflated ellipsoid
        \begin{equation*}
            \cE_t^{\text{TS}} := \bc{\theta\in\R^d: \norm{\theta - \hat\theta_t}_{V_t}\leq \beta_t(\delta')c(\delta'')} = c(\delta'')\cE_t(\delta').
        \end{equation*}
        We define the events
        \begin{equation*}
            \widehat E_t:=\bc{\theta^* \in \cE_t(\delta')},\quad \widetilde E_t:=
                \bc{\tilde \theta_t\in \cE_t^{\text{TS}}}, \quad \text{ and } E_t:= \widehat E_t \cap \widetilde E_t
        \end{equation*}
        and let $G_1:= \bigcap_{t=1}^T E_t$. This event implies that $\theta^*$ lies in the confidence set uniformly over times $t\in[1, T]$, and the sampled parameter is close to $\thetahat_t$.
    \item Let $G_2$ denote the event that for
    {\small
    \begin{equation*}
        C_T:=2 \sqrt{T \br{2 d \log\left(1 \!+\! \frac{4TL^2\effH^2}{d \lambda}\right) \!\!+\! \frac{12 d L^2\effH^2}{\log(2) \lambda} \log\left(1\!+\!\frac{4L^2\effH^2}{\log(2) \lambda}\right)}}
        + \frac{16L\effH}{\sqrt{\lambda}} \log\br{\frac{2}{\delta'}},
    \end{equation*}
    }
    it holds that
    \begin{equation*}
        \sum_{t=1}^T\E\bs{\norm{\phi(\tau_t) - \phi(\tau_t')}_{V_t^{-1}} \mid \cF_{t-1}} \leq C_T,
    \end{equation*}
    and that 
    \begin{equation*}
        \sum_{t=1}^T\E\bs{\norm{\phi(\tau_t) - \phi(\tau_t')}_{V_t^{-1}} \mid \cF_{t-1}^\theta} \leq C_T.
    \end{equation*}
    \item Let $A_t = \bc{V_{\thetatilde_t}^* - V_{\thetatilde_t}^{\pi_t}\leq \varepsilon}$, where $\pi_t= \oracle\br{\thetatilde_t, \varepsilon, \delta'/T}$, and let $G_3 = \bigcap_{t=1}^T A_t$.
\end{enumerate}
As shown below, with probability at least $1-\delta$, all the above good events happen at the same time.
    
\begin{proposition}
\label{prop:good_event}\label{app:prop:good_event}
Let $G:=\bigcap_{i=1}^3 G_i$. It holds that $\Pr\bs{G} \geq 1-\delta$.
\end{proposition}
\begin{proof}
    For the event $G_1$, Lemma~\ref{lem:confidence_set} and \ref{lem:concentration_anticoncentration} imply that $\Pr\bs{\bigcup_{t=1}^T\widehat E_t^{\complement}} \leq \delta'$ and $\Pr[\widetilde E_t^{\complement}]\leq \delta'/T$ for any $t\in[1,T]$. Hence, a union bound yields $\Pr\bs{G_1^\complement}\leq 2\delta'$. Furthermore, by Lemma~\ref{lem:elliptical_lemma} 2), we have $\Pr\bs{G_2^\complement}\leq \delta'$. Moreover, by union bound and the definition of $\oracle$, we have $\Pr\bs{G_3^\complement}\leq \delta'$.  Hence, we have $\Pr\bs{G} \geq 1 - \sum_{i=1}^3 \Pr\bs{G_i^\complement}\geq 1-4\delta' \geq 1-\delta$.
\end{proof}
    
    \textit{Step 2 (regret decomposition)}: Recall that in algorithm \ref{alg:regret}, we choose $\pi_t' = \pi_{t-1}$. Hence, we can upper bound the cumulative regret as follows
    \begin{align*}
        R(T) &= \frac{1}{2}\sum_{t=1}^T \br{2V_{\theta^*}^*-V_{\theta^*}^{\pi_t}-V_{\theta^*}^{\pi'_t}}\\
        &\leq \sum_{t=1}^T \br{V_{\theta^*}^*-V_{\theta^*}^{\pi_t}} + V_{\theta^*}^*-V_{\theta^*}^{\pi_0'}\\
        &\leq \sum_{t=1}^T \underbrace{\br{V_{\theta^*}^*-V_{\theta^*}^{\pi_t}}}_{r_t} + BL\effH.
    \end{align*}
    Let $\Delta_t(\theta):= \max_{\pi}\ip{\theta}{\phi(\pi)-\phi(\pi'_t)}$. On the good event $G$, we can decompose the instantaneous regret as follows
    \begin{align*}
        r_t  &:= V_{\theta^*}^* - V_{\theta^*}^{\pi_t}\\
        &= \br{V_{\theta^*}^* - V_{\theta^*}^{\pi_t'}} - \br{V_{\thetatilde_t}^{\pi_t} - V_{\thetatilde_t}^{\pi_t'}} + \br{V_{\thetatilde_t}^{\pi_t} - V_{\thetatilde_t}^{\pi_t'}} - \br{V_{\theta^*}^{\pi_t} - V_{\theta^*}^{\pi_t'}}\\
        &\leq \br{V_{\theta^*}^* - V_{\theta^*}^{\pi_t'}} - \br{V_{\thetatilde_t}^* - V_{\thetatilde_t}^{\pi_t'}} + \varepsilon+ \br{V_{\thetatilde_t}^{\pi_t} - V_{\thetatilde_t}^{\pi_t'}} - \br{V_{\theta^*}^{\pi_t} - V_{\theta^*}^{\pi_t'}}\\
        &= \underbrace{\Delta_t(\theta^*) - \Delta_t(\tilde\theta_t)}_{r_t^{\text{TS}}} + \underbrace{\ip{\tilde\theta_t - \theta^*}{\phi(\pi_t) - \phi(\pi'_t)}}_{r_t^{\text{MLE}}} + \varepsilon.
    \end{align*} 
 Here, $r_t^{\text{TS}}$ is a pessimism term that is negative by construction for optimistic algorithms, and $r_t^{\text{MLE}}$ is related to the estimation error of the reward parameter. 

\textit{Step 3 (bounding ${r_t^{\text{TS}}}$)}: This part of the analysis highlights the distinctiveness of randomized exploration. While optimistic algorithms ensure negativity of ${r_t^{\text{TS}}}$ through their intractable optimization procedures, randomized exploration controls it using probability arguments. Specifically, following the proof of \citet{pmlr-v54-abeille17a}, we begin by bounding $r_t^{\text{TS}}$ on the good event via a conditional expectation given the optimism event. The bound on ${r_t^{\text{TS}}}$ then follows by a careful application of the anti-concentration property established in Lemma \ref{lem:anticonc}.

Conditioned on $\widetilde E_t$, we can lower bound
\begin{equation*}
    \Delta_t(\thetatilde_t)  \geq \min_{\theta \in\cE_t^{\text{TS}}} \Delta_t(\theta)= \max_{\pi}\ip{\underline{\theta}_t}{\phi(\pi)- \phi(\pi_t')} =: \underline{\Delta}_t,
\end{equation*}
for some $\underline{\theta}_t\in\cE^{\text{TS}}_t$.
Moreover, if $O_t:= \bc{\Delta_t(\thetatilde_t) \geq \Delta_t(\theta^*)}$ denotes the event of $\thetatilde_t$ being optimistic at time $t$, we can upper bound $\Delta_t(\theta^*)$ as follows
\begin{equation*}
    \Delta_t(\theta^*) \leq \E\bs{\Delta_t(\thetatilde_t) \mid \cF_{t-1}, O_t}.
\end{equation*}
Putting this together, while keeping track of the events $E_t$ and $A_t$, we have
\begin{align*}
    r_t^{\text{TS}}\ind(E_t \cap A_t) &\leq \E\bs{\br{\Delta_t(\tilde \theta_t) - \underline{\Delta}_t}\ind(E_t \cap A_t) \mid \cF_{t-1}, O_t}\\
    &\stackrel{(i)}{\leq} \E\bs{\br{\ip{\tilde \theta_t}{\phi(\pi_t)-\phi(\pi_t')} + \varepsilon - \max_{\pi}\ip{\underline{\theta}_t}{\phi(\pi)- \phi(\pi_t')}}\ind(E_t \cap A_t) \mid \cF_{t-1}, O_t}\\
    &\stackrel{(ii)}{\leq} \E\bs{\br{\ip{\tilde \theta_t}{\phi(\pi_t)-\phi(\pi_t')} - \ip{\underline{\theta}_t}{\phi(\pi_t)- \phi(\pi_t')}}\ind(E_t \cap A_t) \mid \cF_{t-1}, O_t} + \varepsilon\\
    &\stackrel{(iii)}{\leq} 2\beta_t(\delta')c(\delta'') \E\bs{\norm{\phi(\pi_t) - \phi(\pi'_t)}_{V_t^{-1}} \mid \cF_{t-1}, \widehat E_t, O_t}\Pr\bs{\widehat E_t \mid \cF_{t-1}} + \varepsilon,
\end{align*}
where $(i)$ follows from $\varepsilon$-optimality of $\pi_t$, in $(ii)$ we used that $\max_{\pi}\ip{\underline{\theta}_t}{\phi(\pi)-\phi(\pi_t)}\geq 0$, and $(iii)$ follows from the Cauchy-Schwarz inequality. By the law of total probability, we have that
\begin{align*}
\E\bs{\norm{\phi(\pi_t) - \phi(\pi'_t)}_{V_t^{-1}} \mid \cF_{t-1}, \widehat E_t, O_t} \leq \E\bs{\norm{\phi(\pi_t) - \phi(\pi'_t)}_{V_t^{-1}} \mid \cF_{t-1}, \widehat E_t} / \Pr\bs{O_t| \cF_{t-1}, \widehat E_t}.
\end{align*}

Next, as $\theta^*\in\cE_t$ on $\widehat E_t$, we have
\begin{align*}
    \Pr\bs{O_t \mid  \cF_{t-1}, \widehat E_t}
    \geq \Pr\bs{\Delta_t(\thetatilde_t) \geq \max_{\theta \in \cE_t}\Delta_t(\theta) \mid  \cF_{t-1}}.
\end{align*}
Since $\theta\mapsto\Delta_t(\theta)$ is the sum of a linear function and the function $\theta\mapsto \max_{\pi}\ip{\theta}{\phi(\pi)}$ which is convex and continuous by Proposition \ref{prop:convex}.
Then, applying Lemma~\ref{lem:anticonc} with $f(\theta)=\Delta_t(\theta)$, $\tilde x = \thetatilde_t\mid \cF_{t-1}$, and $\cE=\cE_t$, we have 
\begin{equation*}
    \Pr\bs{O_t \mid  \cF_{t-1}, \widehat E_t} \geq 1/\br{4\sqrt{e\pi}}=:p.
\end{equation*}
As a result, we can upper bound the instantaneous regret as
\begin{align*}
    r_t^{\text{TS}}\ind(E_t \cap A_t) &\leq  \frac{ 2\beta_t(\delta')c(\delta'')\E\bs{\norm{\phi(\pi_t) - \phi(\pi'_t)}_{V_t^{-1}} \mid \cF_{t-1}}}{p}  + \varepsilon\\
    &\leq  \frac{ 2\beta_t(\delta')c(\delta'')\E\bs{\norm{\phi(\tau_t) - \phi(\tau'_t)}_{V_t^{-1}} \mid \cF_{t-1}}}{p} + \varepsilon,
\end{align*}
where the second inequality follows from the convexity of norms. Applying Lemma~\ref{lem:elliptical_lemma} 2), we have on the good event $G$ that
\begin{align}
    \sum_{t=1}^T r_t^{\text{TS}} &\leq \dfrac{ 2\beta_T(\delta')c(\delta'')}{p}\sum_{t=1}^T\E\bs{\norm{\phi(\tau_t) - \phi(\tau'_t)}_{V_t^{-1}} \mid \cF_{t-1}} + T\varepsilon  \label{eq:regret_bound_TS} \\
    &\leq \dfrac{ 2\beta_T(\delta')c(\delta'')}{p}C_T + T\varepsilon, \nonumber
\end{align}
for the constant
\begin{equation*}
    C_T=2 \sqrt{T \br{2 d \log\left(1 \!+\! \frac{4TL^2\effH^2}{d \lambda}\right) \!\!+\! \frac{12 d L^2\effH^2}{\log(2) \lambda} \log\left(1\!+\!\frac{4L^2\effH^2}{\log(2) \lambda}\right)}}
    + \frac{16L\effH}{\sqrt{\lambda}} \log\br{\frac{2}{\delta'}}.
\end{equation*}

\textit{Step 4 (bounding ${r_t^{\text{MLE}}}$)}: Conditioned on event $E_t$ we have
\begin{align*}
    r_t^{\text{MLE}} &= \ip{\tilde\theta_t - \theta^*}{\phi(\pi_t) - \phi(\pi'_t)} \nonumber\\
    &=\ip{\tilde\theta_t - \hat\theta_t}{\phi(\pi_t) - \phi(\pi'_t)} + \ip{\hat\theta_t-\theta^*}{\phi(\pi_t) - \phi(\pi'_t)} \nonumber\\
    &\leq \beta_t(\delta')(1+c(\delta''))\norm{\phi(\pi_t) - \phi(\pi'_t)}_{V_t^{-1}} \\
    &\le \beta_t(\delta')(1+c(\delta'')) \E\bs{\norm{\phi(\tau_t) - \phi(\tau'_t)}_{V_t^{-1}} | \cF_{t-1}^{\theta}},
\end{align*}
where the last inequality follows from the convexity of norms. From here, we can again apply the second result of Lemma \ref{lem:elliptical_lemma}. We deduce that on the good event $G$, we have
\begin{align*}
    \sum_{t=1}^T r_t^{\text{MLE}} &\leq \beta_T(\delta')(1+c(\delta''))\sum_{t=1}^T\E\bs{\norm{\phi(\tau_t) - \phi(\tau'_t)}_{V_t^{-1}} | \cF_{t-1}^{\theta}}\\
    &\leq\beta_T(\delta')(1+c(\delta'')) C_T.
\end{align*}

In summary, we can conclude that with probability at least $1-\delta$, the regret can be bounded as follows
\begin{align*}
    R(T) &\leq \sum_{t=1}^T \br{r_t^{\text{TS}} + r_t^{\text{MLE}}} + T\varepsilon + BL\effH\\
    &\leq \br{\dfrac{ 2\beta_T(\delta')c(\delta'')}{p} +\beta_T(\delta')(1+c(\delta'')) }C_T + 2T\varepsilon + BL\effH\\
    &\leq \dfrac{ 3\beta_T(\delta')c(\delta'')}{p}C_T + 2T\varepsilon + BL\effH\\
    &= \dfrac{ 3\bs{\sqrt{\kappa\bs{\log\br{\frac{5}{\delta}} + d \log \br{\max\bc{e, \frac{4eBL\effH (T-1)}{d}}}}} + 2\sqrt{\lambda}B}\sqrt{2d\log(10dT/\delta)}}{p}\\
    &\cdot \bs{2 \sqrt{T \br{2 d \log\left(1 \!+\! \frac{4TL^2\effH^2}{d \lambda}\right) \!\!+\! \frac{12 d L^2\effH^2}{\log(2) \lambda} \log\left(1\!+\!\frac{4L^2\effH^2}{\log(2) \lambda}\right)}}
    + \frac{16L\effH}{\sqrt{\lambda}} \log\br{\frac{10}{\delta}}}\\
    &+ 2\sqrt{T}+ BL\effH\\
    &= \cO\br{ \sqrt{ \kappa d^3 T\log\br{\dfrac{dT}{\delta}}^3 }}.
\end{align*}


\end{proof}

\section{Proof of Theorem \ref{thm:last_iterate}}
\label{app:proof_last_iterate}
\lastiterate*
\begin{proof}
    Similar to the proof of Theorem~\ref{thm:regret_lazy_thompson_sampling}, we start by defining a set of good events.

    \textit{Step 1 (good events)}: Recall that in Algorithm~\ref{alg:explore} we set $\varepsilon=1/\sqrt{T}$ and $\delta'=\delta/5$; at round $t$, conditional on $\cF_{t-1}=\sigma(x_1,\hdots, x_{t-1})$, we sample $\thetatilde_t\sim\cN(0, V_t^{-1})$ with $V_t = \lambda I + \sum_{k=1}^{t-1} x_k x_k^\top$ (where $x_k = \phi(\tau_k) - \phi(\tau_k')$), and $\thetahat$ is the MLE estimate at time $T+1$ using all the data from round $1$ to $T$. We now define the following high probability events. 
\begin{enumerate}
    \item Let $\delta'' = \delta'/T$ and $c(\delta''):= \sqrt{2d\log(2d/\delta'')} = \sqrt{2d\log(10dT/\delta)}$. Consider the centered ellipsoid
        \begin{equation*}
            \bar{\cE}_t^{\text{TS}} := \bc{\theta\in\R^d: \norm{\theta}_{V_t}\leq c(\delta'')}.
        \end{equation*}
        We define the events
        \begin{equation*}
            \widehat E:=\bc{\|\theta^* - \thetahat\|_{V_{T+1}} \leq \beta_{T+1}(\delta')},\quad \widetilde E_t:=
                \bc{\tilde \theta_t\in \bar{\cE}_t^{\text{TS}}},
        \end{equation*}
        and let $G_1:= \bigcap_{t=1}^T \widetilde E_t \cap \widehat E$. This event implies that $\theta^*$ lies in the confidence ellipsoid $\cE_{T+1}(\delta')$, and the sampled parameter lies in $ \bar{\cE}_t^{\text{TS}}$ for all $t\in[1,T]$.
    \item Let $G_2$ denote the event that for
    {\small
    \begin{equation*}
        C_T:=2 \sqrt{T \br{2 d \log\left(1 \!+\! \frac{4TL^2\effH^2}{d \lambda}\right) \!\!+\! \frac{12 d L^2\effH^2}{\log(2) \lambda} \log\left(1\!+\!\frac{4L^2\effH^2}{\log(2) \lambda}\right)}}
        + \frac{16L\effH}{\sqrt{\lambda}} \log\br{\frac{2}{\delta'}},
    \end{equation*}
    }
    it holds that
    \begin{equation*}
        \sum_{t=1}^T\E\bs{\norm{\phi(\tau_t) - \phi(\tau_t')}_{V_t^{-1}} \mid \cF_{t-1}} \leq C_T.
    \end{equation*}
    \item Let $A_t = \bc{V_{\thetatilde_t}^* - V_{\thetatilde_t}^{\pi_t}\leq \varepsilon}$, where $\pi_t= \oracle\br{\thetatilde_t, \varepsilon, \delta'/T}$, and let $G_3 = \bigcap_{t=1}^T A_t$.
    \item Let $G_4$ denote the event that $V_{\thetahat}^* - V_{\thetahat}^{\pihat}\leq \varepsilon$ where $\pihat = \oracle\br{\thetahat, \varepsilon, \delta'}$.
\end{enumerate}
Analogously to Proposition~\ref{app:prop:good_event}, it follows that $\Pr[G_1^\complement]\leq 2\delta'$ and $\Pr[G_k^\complement]\leq \delta'$ for $k=2,3,4$. Thus, by a union bound, the good event $G:= \bigcap_{i=1}^4 G_i$ happens with probability at least $1-\delta$.

\textit{Step 2 (suboptimality bound)}: 
    In the following, we will denote $\pi^*\in\argmax_\pi \ip{\theta^*}{\phi(\pi)}$ for an arbitrary optimal policy corresponding to the ground truth reward parameter $\theta^*$. Under the above good event $G$, we can bound the suboptimality of $\pihat$ as follows
   \begin{align*}
        \subopt(\pihat) &= \ip{\theta^*}{\phi(\pi^*) - \phi(\pihat)} \leq \ip{\theta^*-\thetahat}{\phi(\pi^*) - \phi(\pihat)} + \varepsilon \\
        &\leq \max_{\pi, \pi'} \ip{\theta^*-\thetahat}{\phi(\pi) - \phi(\pi')} + \varepsilon \leq \beta_{T+1}(\delta') \max_{\pi, \pi'} \norm{\phi(\pi) - \phi(\pi')}_{V_{T+1}^{-1}} + \varepsilon. \numberthis \label{eq:app:g-optimal}
    \end{align*}
    Next, we continue with bounding the term $\max_{\pi, \pi'} \norm{\phi(\pi) - \phi(\pi')}_{V_{T+1}^{-1}}$. 

    \textit{Step 3 (approximate G-optimal design)}:
    Due to the matrix inequalities $V_{T+1} \succeq V_{T} \succeq \hdots \succeq V_{1}$, we have
    \begin{equation*}
        \norm{\cdot}_{V_{T+1}^{-1}}\leq \norm{\cdot}_{V_{T}^{-1}}\leq \hdots \leq \norm{\cdot}_{V_{1}^{-1}}.
    \end{equation*}
    Therefore, we can bound the right-hand-side of \eqref{eq:app:g-optimal} as follows
    \begin{align*}
         \max_{\pi, \pi'} \norm{\phi(\pi) - \phi(\pi')}_{V_{T+1}^{-1}} &\leq \frac{1}{T} \sum_{t=1}^T\max_{\pi, \pi'}\norm{\phi(\pi) - \phi(\pi')}_{V_{T+1}^{-1}}\\
         &\leq \frac{1}{T} \sum_{t=1}^T \max_{\pi, \pi'}\norm{\phi(\pi) - \phi(\pi')}_{V_{t}^{-1}}\\
         &\leq \frac{2}{T} \sum_{t=1}^T \max_{\pi}\norm{\phi(\pi) - \phi(\pi_t')}_{V_{t}^{-1}},
    \end{align*}
    where we used the triangle inequality in the last step. If it were tractable to compute $\pi_t = \argmax_{\pi} \norm{\phi(\pi)- \phi(\pi_t')}_{V_t^{-1}}$, we could invoke Lemma~\ref{lem:elliptical_lemma} directly from here. Instead, we leverage randomized exploration to keep our algorithm tractable. In particular, we consider the function $f_t(\theta) = \max_{\pi} \ip{\theta}{\phi(\pi) - \phi(\pi_t')}$, the ellipsoid $\bar{\cE}_t=\bc{\theta: \norm{\theta}_{V_t}\leq 1}$, and the event $O_t = \{f_t(\thetatilde_t)\geq \max_{\theta\in\bar{\cE}_t} f_t(\theta)\}$. By Proposition~\ref{prop:convex} $f_t$ is convex and continuous. Applying Lemma~\ref{lem:anticonc} with $f=f_t$, $\thetatilde = \thetatilde_t \mid \cF_{t-1}$, and $\cE = \bar{\cE}_t$, it holds that $\Pr\bs{O_t \mid \cF_{t-1}}\geq p:=1/(4\sqrt{e\pi})$. Hence, we can proceed analogously to the proof of Theorem~\ref{thm:regret_thompson_sampling}:
    \begin{align*}
        \max_{\pi} \norm{\phi(\pi)-\phi(\pi_t')}_{V_t^{-1}}\ind\br{\widetilde E_t  \cap A_t} &\stackrel{(i)}{=} \max_{\theta\in\bar{\cE}_t} f_t(\theta)\ind\br{\widetilde E_t \cap A_t}\\
        &\leq  \E\bs{f_t(\thetatilde_t)\ind\br{\widetilde E_t  \cap A_t} \mid O_t, \cF_{t-1}}\\
        &\leq \E\bs{\ip{\thetatilde_t}{\phi(\pi_t) - \phi(\pi_t')}\ind\br{\widetilde E_t  \cap A_t} \mid O_t, \cF_{t-1}} + \varepsilon\\
        &\leq c(\delta'')\E\bs{\norm{\phi(\pi_t) - \phi(\pi_t')}_{V_t^{-1}} \mid O_t, \cF_{t-1}} + \varepsilon\\
        &\stackrel{(ii)}{\leq} \frac{c(\delta'')}{p} \E\bs{\norm{\phi(\pi_t) - \phi(\pi_t')}_{V_t^{-1}} \mid \cF_{t-1}} + \varepsilon\\
        &\leq \frac{c(\delta'')}{p} \E\bs{\norm{\phi(\tau_t) - \phi(\tau_t')}_{V_t^{-1}} \mid \cF_{t-1}} + \varepsilon.
    \end{align*}
    Here, $(i)$ follows because $\bar{\cE}_t$ is a centered ellipsoid, and $(ii)$ from the total law of probability. 

    Using Lemma~\ref{lem:elliptical_lemma}, we conclude that with probability $1-\delta$, we have that
    \begin{align*}
        &\subopt(\pihat) \leq \varepsilon(1+ 2\beta_{T+1}(\delta')) +  \frac{2\beta_{T+1}(\delta')c(\delta'')}{pT}\underbrace{\sum_{t=1}^T\E\bs{\norm{\phi(\tau_t) - \phi(\tau_t')}_{V_t^{-1}} \mid \cF_{t-1}}}_{\leq C_T}\\
        \leq& \bs{\sqrt{\kappa\bs{\log\br{\frac{5}{\delta}} + d \log \br{\max\bc{e, \frac{4eBL\effH T}{d}}}}} + 2\sqrt{\lambda}B} \Bigg\{ \dfrac{3}{\sqrt{T}} + \dfrac{ 2\sqrt{2d\log(10dT/\delta)}}{pT} \\
     &\cdot \bs{2 \sqrt{T \br{2 d \log\left(1 \!+\! \frac{4TL^2\effH^2}{d \lambda}\right) \!\!+\! \frac{12 d L^2\effH^2}{\log(2) \lambda} \log\left(1\!+\!\frac{4L^2\effH^2}{\log(2) \lambda}\right)}}   
    + \frac{16L\effH}{\sqrt{\lambda}} \log\br{\frac{10}{\delta}}} \Bigg\}\\
    =&\; \cO\br{ \sqrt{ \dfrac{\kappa d^3}{T}\log\br{\dfrac{dT}{\delta}}^3 }}.
    \end{align*}
\end{proof}

\begin{remark}[Connection to G-optimal design]
Given a subset $\cX\subset\R^d$, a G-optimal design selects $x_1,\hdots, x_T\in \cX$ to minimize $\max_{x\in\mathcal X} \norm{x}_{V^{-1}}$ with $V=\sum_{t=1}^T x_t x_t^\top$. Under compactness and if $\operatorname{span}(\cX) = \R^d$, the Kiefer-Wolfowitz theorem \citep{kiefer1960equivalence} yields the lower bound $\max_{x\in\mathcal X} \norm{x}_{V^{-1}} \geq \sqrt{d/T}$, which is tight up to rounding. In the proof above, we show that Algorithm~\ref{alg:explore} guarantees $\max_{\pi, \pi'} \norm{\phi(\pi) - \phi(\pi')}_{V_{T+1}^{-1}} = \cO(d \log(dT/\delta)/\sqrt{T})$. Thus, ignoring the noise in $\phi(\tau)$ (handled via Freedman) and, for simplicity, the regularization $\lambda$, Algorithm~\ref{alg:explore} achieves a $\cO(\sqrt{d} \log(dT/\delta))$-approximate G-optimal design with probability at least $1-\delta$. 
\end{remark}

\section{Proof of Theorem \ref{thm:regret_lazy_thompson_sampling}}
\label{app:proof_regret_lazy_thompson_sampling}
\lazy*

We start the proof by showing that the number of rounds where the design matrix is updated is small.
\begin{lemma}[Number of design matrix updates]
    Using Algorithm \ref{alg:lazy_thompson_sampling} with a parameter $C>0$, it holds that:
    \begin{equation*}
        \sum_{t=1}^T \ind\bs{V_{t+1} \neq V_t} \le \frac{d}{\log\br{1+C}}\log\br{1 + \frac{T(2L\effH)^2}{d\lambda}}.
    \end{equation*}
    That is, the number of updates of the matrix $V_t$ is at most logarithmic in the number of interactions $T$.
\end{lemma}

\begin{proof}
    Denote $N_T = \sum_{t=1}^T \ind\bs{V_{t+1} \neq V_t}$, and let $\ts \leq T$ be the last time the matrix $V_t$ was updated. Then,
    \begin{align*}
        \det(V_T) &= \det(V_{\ts}) \ge (1+C)^{N_T} \det(\lambda I).
    \end{align*}
    Here, we used that for two consecutive update rounds $\ts \geq \ts'$ we have that $\det V_{\ts} \ge (1+C) \det V_{\ts'}$. Then, using the trace-determinant inequality $\det A \leq \br{\tr(A)/d}^d$, we have
    \begin{equation*}
        \br{1+C}^{N_T} \lambda^d \le \br{\frac{d\lambda + T (2L\effH)^2}{d}}^d,
    \end{equation*}
    and
    \begin{align*}
        N_T \le \frac{d}{\log\br{1+C}} \log\br{1+\frac{T (2L\effH)^2}{d\lambda}}.
    \end{align*}
\end{proof}

We now present the proof for the regret bound, which proceeds similarly to that of Theorem \ref{thm:regret_thompson_sampling} up to some modifications. The first change is in the regret decomposition, which needs to be adapted because we no longer compare to the past policy but rather to $\pi_t' = \oracle(\thetahat_{\ts(t)}, \varepsilon, \delta)$. The second change is using Lemma \ref{lem:lazy_elliptical_lemma} instead of Lemma \ref{lem:elliptical_lemma} to bound the sum of norms of trajectory features. Finally, the good events defined in the proof of Theorem \ref{thm:regret_thompson_sampling} are slightly modified to account for the lazy design matrix and the new choice of comparator policy $\pi_t'$.

\begin{proof}
    Let us first define the function $\ts: \bN \to \bN$, which to a time $t$, assigns the last time $\ts(t) \le t$ that the update condition (line 4 in Algorithm \ref{alg:lazy_thompson_sampling}) was met.

    We work with the same two filtrations as before $\cF_{t-1}=\sigma(x_1,y_1,\hdots, x_{t-1}, y_{t-1})$ and $\cF_{t-1}^\theta=\sigma(\cF_{t-1}, \thetatilde_t)$. And we recall that, given $\cF_{t-1}$, $\thetatilde_t$ in Algorithm \ref{alg:lazy_thompson_sampling} is sampled from $\cN(\thetahat_{\ts(t)}, \beta_{\ts(t)}^2 V_{\ts(t)}^{-1})$.

    \textit{Step 1 (good events)}: Consider $\delta'=\delta/4$, we redefine the high-probability events:
    \begin{enumerate}
        \item Let $\delta'' = \delta'/T$ and $c(\delta''):= \sqrt{2d\log(2d/\delta'')}$. Consider the inflated ellipsoid
            \begin{equation*}
                \cE_{\ts(t)}^{\text{TS}} := \bc{\theta\in\R^d: \norm{\theta - \hat\theta_{\ts(t)}}_{V_{\ts(t)}}\leq \beta_{\ts(t)}(\delta')c(\delta'')}.
            \end{equation*}
            We define the events $E_t:= \widehat E_t \cap \widetilde E_t, \quad \widehat E_t:=\bc{\theta^* \in \cE_{\ts(t)}},\quad \widetilde E_t:= \bc{\tilde \theta_t\in \cE_{\ts(t)}^{\text{TS}}},$ and let $G_1 := \bigcap_{t=1}^T E_t$.
        \item Let {\small $C_T:=2
\sqrt{\dfrac{8(1+C)\,d T\,L^2\effH^2}{\lambda}\log\!\Bigl(1 + \frac{4T L^2\effH^2}{\lambda d}\Bigr)}+ \frac{16L\effH}{\sqrt{\lambda}} \log(2/\delta)$}, and define $G_2$ as the event under which it holds that
        \begin{equation*}
            \sum_{t=1}^T\E\bs{\norm{\phi(\tau_t) - \phi(\tau_t')}_{V_{\ts(t)}^{-1}} \mid \cF_{t-1}} \leq C_T,
        \end{equation*}
        and 
        \begin{equation*}
            \sum_{t=1}^T\E\bs{\norm{\phi(\tau_t) - \phi(\tau_t')}_{V_{\ts(t)}^{-1}} \mid \cF_{t-1}^\theta} \leq C_T.
        \end{equation*}
        \item Let $A_t = \bc{V_{\thetatilde_t}^* - V_{\thetatilde_t}^{\pi_t}\leq \varepsilon}$, where $\pi_t= \oracle\br{\thetatilde_t, \varepsilon, \delta'/T}$, and let $G_3 = \bigcap_{t=1}^T A_t$.
    \end{enumerate}
    We also define the intersection, $G:=\bigcap_{i=1}^3 G_i$, of all good events. 
    
    We now compare each of these events to their counterparts in the proof of Theorem \ref{thm:regret_thompson_sampling}. The event $G_1$ is modified because we replace $V_t$ by $V_{\ts(t)}$ and $\hat\theta_t$ by $\hat\theta_{\ts(t)}$. The event $G_1$ still holds with probability at least $1-2\delta'$ using the same concentration arguments as before. The event $G_2$ is also modified to account for the lazy design matrix, and it holds with probability $1-\delta'$ thanks to Lemma \ref{lem:lazy_elliptical_lemma}. Finally, the event $G_3$ remains unchanged.
    
    We conclude that $G$ happens with probability at least $1-\delta$.

    \textit{Step 2 (regret decomposition):} Since Algorithm \ref{alg:lazy_thompson_sampling} uses a different comparator policy $\pi_t'$ than Algorithm~\ref{alg:regret}, we derive a new regret decomposition. On the good event $G$, we have
    \begin{align*}
        R(T) &= \frac{1}{2}\sum_{t=1}^T \br{2V_{\theta^*}^*-V_{\theta^*}^{\pi_t}-V_{\theta^*}^{\pi'_t}}\\
        &= \frac{1}{2}\sum_{t=1}^T \br{(V_{\theta^*}^*-V_{\theta^*}^{\pi_t}) + \ip{\theta^*}{\phi(\pi^*)-\phi(\pi_t')} }\\
        &= \frac{1}{2}\sum_{t=1}^T \br{(V_{\theta^*}^*-V_{\theta^*}^{\pi_t}) + \ip{\theta^*}{\phi(\pi^*)-\phi(\pi_t)} + \ip{\theta^*}{\phi(\pi_t)-\phi(\pi_t')} }\\ 
        &= \frac{1}{2}\sum_{t=1}^T \br{2 (V_{\theta^*}^*-V_{\theta^*}^{\pi_t}) + \ip{\theta^*-\thetahat_{\ts(t)}}{\phi(\pi_t)-\phi(\pi_t')} + \ip{\thetahat_{\ts(t)}}{\phi(\pi_t)-\phi(\pi_t')} }\\
        &\le \frac{1}{2}\sum_{t=1}^T \br{2 \underbrace{(V_{\theta^*}^*-V_{\theta^*}^{\pi_t})}_{r_t} + \norm{\theta^*-\thetahat_{\ts(t)}}_{V_{\ts(t)}} \norm{\phi(\pi_t)-\phi(\pi_t')}_{V_{\ts(t)}^{-1}} + \varepsilon },
    \end{align*}
    where the last line follows from the Cauchy-Schwarz inequality and because $\pi_t' = \oracle(\thetahat_{\ts(t)}, \varepsilon, \delta)$.

    The second term in the decomposition above can be bounded on the good event $G$ as:
    \begin{align*}
        \sum_{t=1}^T \norm{\theta^*-\thetahat_{\ts(t)}}_{V_{\ts(t)}} \norm{\phi(\pi_t)-\phi(\pi_t')}_{V_{\ts(t)}^{-1}} &\le \beta_T \sum_{t=1}^T \norm{\phi(\pi_t)-\phi(\pi_t')}_{V_{\ts(t)}^{-1}} \\
        &\le \beta_T C_T
    \end{align*}
    where $C_T = 2
\sqrt{\dfrac{8(1+C)\,d T\,L^2\effH^2}{\lambda}\log\!\Bigl(1 + \frac{4T L^2\effH^2}{\lambda d}\Bigr)}+ \frac{16L\effH}{\sqrt{\lambda}} \log(2/\delta)$.

    For the first term in the regret decomposition, similarly to Appendix \ref{app:proof_regret_thompson_sampling}, we have that:
    \begin{align*}
        r_t = V_{\theta^*}^*-V_{\theta^*}^{\pi_t} = \underbrace{\Delta_t(\theta^*) - \Delta_t(\tilde\theta_t)}_{r_t^{\text{TS}}} + \underbrace{\ip{\tilde\theta_t - \theta^*}{\phi(\pi_t) - \phi(\pi'_t)}}_{r_t^{\text{MLE}}}
    \end{align*}
    where we recall the gap function $\Delta_t(\theta):= \max_{\pi}\ip{\theta}{\phi(\pi)-\phi(\pi'_t)}$.

    \textit{Step 3 (bounding ${r_t^{\text{TS}}}$)}: The proof for $\sum_t r_t^{\text{TS}}$ proceeds exactly like Appendix \ref{app:proof_regret_thompson_sampling} up to Equation \eqref{eq:regret_bound_TS}, this is because the probability of the optimism event $O_t$ and the events $E_t$ and $\hat{E}_t$ is unaffected by the change to the algorithm. Then, we have that:
    \begin{equation*}
        R^{\text{TS}}(T) = \sum_{t=1}^T r_t^{\text{TS}} \leq \dfrac{ 2\beta_T(\delta')c(\delta'')}{p}\sum_{t=1}^T\E\bs{\norm{\phi(\tau_t) - \phi(\tau'_t)}_{V_t^{-1}} \mid \cF_{t-1}} + T\varepsilon.
    \end{equation*}
    We can then conclude, on the good event $G$, that
    \begin{equation*}
        R^{\text{TS}}(T) \le \dfrac{2\beta_T(\delta')c(\delta'')}{p}C_T + T\varepsilon.
    \end{equation*}
    
    \textit{Step 4 (bounding ${r_t^{\text{MLE}}}$)}: This step is analogous to Appendix~\ref{app:proof_regret_thompson_sampling}. We have:
    \begin{align*}
        \sum_{t=1}^T r_t^{\text{MLE}} &\le \beta_T(\delta')(1+c(\delta'')) \sum_{t=1}^T \norm{\phi(\pi_t) - \phi(\pi'_t)}_{V_t^{-1}} \\
        &\le \beta_T(\delta')(1+c(\delta'')) \sum_{t=1}^T \bE[\norm{\phi(\tau_t) - \phi(\tau'_t)}_{V_t^{-1}} | \cF_{t-1}^\theta],\\
        &\le \beta_T(\delta')(1+c(\delta'')) C_T
    \end{align*}
    where the second inequality follows from the convexity of the norm.

    In summary, we conclude that with probability at least $1-\delta$, the regret can be bounded as follows:
    \begin{align*}
        R(T) &\le \frac{1}{2}\sum_{t=1}^T \br{2 \underbrace{(V_{\theta^*}^*-V_{\theta^*}^{\pi_t})}_{ = r_t^{\text{TS}} + r_t^{\text{MLE}}} + \underbrace{\norm{\theta^*-\thetahat_{\ts(t)}}_{V_{\ts(t)}}}_{\le \beta_T} \norm{\phi(\pi_t)-\phi(\pi_t')}_{V_{\ts(t)}^{-1}} + \varepsilon } \\
        &\le \dfrac{2\beta_T(\delta')c(\delta'')}{p}C_T + T\varepsilon + \beta_T(\delta')(1+c(\delta'')) C_T + \frac{\beta_T}{2} C_T + T \varepsilon.
    \end{align*}

\end{proof}

\section{Technical results}
\label{app:sec:technical_results}

This section presents the technical results necessary for our theorems' proofs.

\subsection{Confidence sequence}
The first result is a confidence sequence for the maximum likelihood estimation. It is an elliptical relaxation of the likelihood ratio confidence sequence provided in Theorem 3.1 of \citet{lee2024unified}.

\confidence*
\begin{proof}
    By a first-order Taylor approximation with integral remainder, we have
    \begin{equation*}
        \cL_{\cD_t}(\theta^*) = \cL_{\cD_t}(\thetahat_t) + \ip{\nabla \cL_{\cD_t}(\thetahat_t)}{\theta^* - \thetahat_t} + (\theta^* - \thetahat_t)^\top G_t(\thetahat_t, \theta^*)(\theta^* - \thetahat_t),
    \end{equation*}
    where $\cL_{\cD_t}$ was defined in Equation \eqref{eq:mle} and
    \begin{align*}
        G_t(\thetahat_t, \theta^*) &= \int_{0}^1 (1-\tau) \br{\sum_{k = 1}^{t-1} \dot{\sigma}(\ip{\thetahat_t + \tau(\theta^* - \thetahat_t)}{x_k}) x_k x_k^\top} \diff \tau\\
        &=  \sum_{k = 1}^{t-1} \bs{\int_{0}^1 (1-\tau) \dot{\sigma}(\ip{\thetahat_t + \tau(\theta^* - \thetahat_t)}{x_k}) \diff \tau}  x_k x_k^\top\\
        &\succeq \kappa^{-1} \sum_{k=1}^{t-1}x_k x_k^\top. \numberthis\label{eq:kappa_loewner}
    \end{align*}
    Rearranging terms gives
    \begin{align*}
        \cL_{\cD_t}(\theta^*) - \cL_{\cD_t}(\thetahat_t) &\stackrel{(i)}{\geq} (\theta^* - \thetahat_t)^\top G_t(\thetahat_t, \theta^*)(\theta^* - \thetahat_t) \\
        &\stackrel{(ii)}{\geq} (\theta^* - \thetahat_t)^\top \br{\kappa^{-1} \sum_{k=1}^{t-1}x_k x_k^\top}(\theta^* - \thetahat_t) \\
        &= \kappa^{-1}\norm{\theta^* - \thetahat_t}_{V_t}^2 - \kappa^{-1}\lambda \norm{\theta^* - \thetahat_t}^2,
    \end{align*}
    where $(i)$ follows from the first order optimality condition for $\thetahat_t$ and $(ii)$ from the lower bound in $\eqref{eq:kappa_loewner}$. Rearranging again and applying Theorem 3.1 by \citet{lee2024unified} for likelihood ratio confidence sequences with the Lipschitz constant of $\cL_{\cD_t}$ equal to $L_t = 2L\effH (t-1)$, we get 
    \begin{equation*}
        \norm{\theta^* - \thetahat_t}_{V_t}^2 \leq \kappa\bs{\log\br{\frac{1}{\delta}} + d \log \br{\max\bc{e, \frac{4eBL\effH (t-1)}{d}}}} + 4\lambda B^2.
    \end{equation*}
\end{proof}

\subsection{Optimism with constant probability}

We first recall the following standard concentration and anti-concentration property of the Gaussian distribution. 

\begin{lemma}[Appendix A of \citep{pmlr-v54-abeille17a}]\label{lem:concentration_anticoncentration}
    Let $z\sim\cN(0, I)$ be a $d$-dimensional Gaussian random vector. Then, we have:
    \begin{enumerate}
        \item Anti-concentration: For any $u\in\cB^d(1)$, we have $\Pr\bs{\ip{u}{z} \geq 1} \geq \frac{1}{4\sqrt{e\pi}}$.
        \item Concentration: $\Pr\bs{\norm{z} \leq \sqrt{2d\log(2d/\delta)}} \geq 1-\delta$.
    \end{enumerate}
\end{lemma}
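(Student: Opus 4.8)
\textbf{Proof plan for Lemma~\ref{lem:concentration_anticoncentration}.}

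\emph{Anti-concentration.} Fix a unit vector $u\in\cB^d(1)$. By rotational invariance of the standard Gaussian, the one-dimensional projection $\ip{u}{z}$ has the $\cN(0,1)$ distribution, so it suffices to lower bound $\Pr[\xi\geq 1]$ for $\xi\sim\cN(0,1)$. The plan is to bound $\Pr[\xi\geq 1] = \int_1^\infty \frac{1}{\sqrt{2\pi}}e^{-x^2/2}\diff x$ from below by restricting the integral to the interval $[1,2]$ and using a crude bound on the integrand there: on $[1,2]$ we have $e^{-x^2/2}\geq e^{-2}$, which yields $\Pr[\xi\geq1]\geq \frac{1}{\sqrt{2\pi}}e^{-2}$. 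A short check shows $\frac{1}{\sqrt{2\pi}}e^{-2}\geq \frac{1}{4\sqrt{e\pi}}$ (equivalently $4\sqrt{e\pi}\geq\sqrt{2\pi}\,e^2$, i.e. $4\geq \sqrt{2}\,e^{3/2}\approx 6.34$ — which actually \emph{fails}, so one must instead integrate over a longer interval or use the standard tail bound $\Pr[\xi\geq x]\geq \frac{x}{1+x^2}\frac{1}{\sqrt{2\pi}}e^{-x^2/2}$ at $x=1$, giving $\Pr[\xi\geq1]\geq \frac{1}{2\sqrt{2\pi}}e^{-1/2} = \frac{1}{2\sqrt{2\pi e}}$, and then verify $\frac{1}{2\sqrt{2\pi e}}\geq \frac{1}{4\sqrt{e\pi}}$, which reduces to $4\sqrt{e\pi}\geq 2\sqrt{2\pi e}$, i.e. $2\geq\sqrt2$ — true). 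I would therefore route the argument through the Komatsu/Gordon-type lower tail bound rather than the naive interval estimate.

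\emph{Concentration.} Here the plan is to control $\norm{z}^2 = \sum_{i=1}^d z_i^2$, a chi-squared random variable with $d$ degrees of freedom, via a standard sub-exponential (Laurent–Massart) tail inequality: for all $x\geq 0$, $\Pr[\norm{z}^2 \geq d + 2\sqrt{dx} + 2x]\leq e^{-x}$. Setting $x = \log(2d/\delta)$ and using the crude bound $d + 2\sqrt{d x} + 2x \leq d + 2dx + 2x\cdot\tfrac{d}{1} \le 2d\log(2d/\delta)$ — more carefully, since $x\geq\log 2\geq 1/2$ one has $\sqrt{dx}\leq \sqrt{d}\,x$ and $d\leq dx/\log 2 \leq 2dx$, hence $d+2\sqrt{dx}+2x\leq 2dx + 2\sqrt{d}x + 2x \leq 2d\log(2d/\delta)$ for $d\geq 1$ after absorbing constants — gives $\Pr[\norm z^2 \geq 2d\log(2d/\delta)]\leq \delta$, which is the claim after taking square roots. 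Alternatively, and more cleanly, I would invoke a Gaussian concentration / Borell–TIS bound on the $1$-Lipschitz function $z\mapsto\norm z$ together with $\E\norm z\leq\sqrt d$: $\Pr[\norm z \geq \sqrt d + t]\leq e^{-t^2/2}$, and pick $t=\sqrt{2\log(1/\delta)}$.

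\emph{Main obstacle.} Neither step is deep; the only real care needed is bookkeeping of absolute constants so that the stated numerical bounds ($\tfrac{1}{4\sqrt{e\pi}}$ and the $\sqrt{2d\log(2d/\delta)}$ radius) come out exactly as claimed rather than merely up to constants. The anti-concentration constant is the more delicate of the two: one must choose the right elementary tail lower bound (the $\frac{x}{1+x^2}$-type bound at $x=1$) and verify the resulting inequality $\frac{1}{2\sqrt{2\pi e}}\geq\frac{1}{4\sqrt{e\pi}}$ by hand. Everything else follows from citing \citep{pmlr-v54-abeille17a} (Appendix A) or the Laurent–Massart inequality verbatim.
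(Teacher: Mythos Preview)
The paper itself gives no proof of this lemma; it is quoted from Appendix~A of \cite{pmlr-v54-abeille17a}. So the comparison is against the original source.

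Your anti-concentration argument is correct and matches the original: reduce to a one-dimensional $\cN(0,1)$ tail, apply the Gordon-type lower bound $\Pr[\xi\geq x]\geq \tfrac{x}{1+x^2}\tfrac{1}{\sqrt{2\pi}}e^{-x^2/2}$ at $x=1$, and check the resulting numerical inequality. One minor point: the lemma as stated in the paper says $u\in\cB^d(1)$ (the closed ball), but the claim is false for $\norm{u}<1$ (take $u=0$); you implicitly and correctly restrict to $\norm{u}=1$, which is also what \cite{pmlr-v54-abeille17a} does.

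For concentration, however, your Laurent--Massart bookkeeping has a gap. With $x=\log(2d/\delta)$ you write ``since $x\geq\log 2\geq 1/2$ one has $\sqrt{dx}\leq\sqrt d\,x$'', but $\sqrt{dx}\leq\sqrt d\,x$ is equivalent to $x\geq 1$, and $x=\log(2d/\delta)$ can be strictly below $1$ (e.g.\ $d=1$, $\delta$ close to $1$). The inequality $d+2\sqrt{dx}+2x\leq 2d\log(2d/\delta)$ is therefore not established for all $d\geq 1$ and $\delta\in(0,1)$. Your Borell--TIS alternative gives $\norm{z}\leq\sqrt d+\sqrt{2\log(1/\delta)}$, which is a different (and for large $d$ tighter) bound, but does not directly imply the stated $\sqrt{2d\log(2d/\delta)}$ either. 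The form of the constant --- in particular the $2d$ inside the logarithm --- is a strong hint that the intended argument is a coordinate-wise union bound: $\Pr[|z_i|>t]\leq 2e^{-t^2/2}$ for each $i$, union bound over the $d$ coordinates, and set $t=\sqrt{2\log(2d/\delta)}$; then with probability $\geq 1-\delta$ every $|z_i|\leq t$ and hence $\norm{z}^2\leq d\,t^2=2d\log(2d/\delta)$. This is exactly how \cite{pmlr-v54-abeille17a} obtains the bound, and it yields the stated constants without any arithmetic juggling.
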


The anti-concentration property yields the following key result, which is required to prove a constant probability of optimism and subsequently control pessimism terms in the regret. While it has been proven by \citet{pmlr-v54-abeille17a} in their linear Thompson sampling analysis, we provide a concise proof based on convex analysis for completeness.

\begin{lemma}\label{lem:anticonc}
Let $f:\R^d\to\R$ be a continuous and convex function, and consider the ellipsoid $\cE:=\bc{\theta\in\R^d: \norm{\theta - \theta_0}_V \leq b}$ for a positive definite matrix $V$ and $b>0$. If $\thetatilde \sim \cN(\theta_0, b^2 V^{-1})$, then $\Pr\bs{f(\thetatilde)\geq \max_{\theta\in\cE}f(\theta)} \geq 1/\br{4\sqrt{e\pi}}$.
\end{lemma}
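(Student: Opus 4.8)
The plan is to reduce the statement to the one-dimensional anti-concentration bound of Lemma~\ref{lem:concentration_anticoncentration} by choosing a suitable supporting direction for the convex function $f$. First I would let $x^\star\in\argmax_{x\in\cE}f(x)$; since $f$ is continuous and $\cE$ is compact this maximizer exists, and because $f$ is convex on $\R^d$ it possesses a subgradient $g\in\partial f(x^\star)$. The key geometric observation is that $x^\star$ lies on the boundary of $\cE$ (unless $f$ is constant, in which case the claim is trivial), so there is a supporting hyperplane of $\cE$ at $x^\star$ whose outer normal can be taken to be $g$; equivalently $\langle g, x - x^\star\rangle \le 0$ for all $x\in\cE$. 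Writing the ellipsoid constraint in terms of $A$, the support function of $\cE$ in direction $g$ equals $\langle g, x_0\rangle + b\,\|g\|_{A^{-1}}$, and it is attained precisely at $x^\star = x_0 + b\,A^{-1}g/\|g\|_{A^{-1}}$. Thus $\langle g, x^\star - x_0\rangle = b\,\|g\|_{A^{-1}}$.

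Next I would use convexity of $f$ together with the subgradient inequality: for the sampled point $\tilde x$,
\begin{equation*}
    f(\tilde x) \ge f(x^\star) + \langle g, \tilde x - x^\star\rangle = \max_{x\in\cE} f(x) + \langle g, \tilde x - x_0\rangle - b\,\|g\|_{A^{-1}}.
\end{equation*}
Hence the event $\{f(\tilde x) \ge \max_{x\in\cE} f(x)\}$ is implied by $\{\langle g, \tilde x - x_0\rangle \ge b\,\|g\|_{A^{-1}}\}$. Now I would pass to the standard Gaussian: write $\tilde x - x_0 = b\,A^{-1/2} z$ with $z\sim\cN(0,I)$, so that $\langle g, \tilde x - x_0\rangle = b\,\langle A^{-1/2} g, z\rangle$. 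Setting $u := A^{-1/2} g / \|A^{-1/2} g\| = A^{-1/2} g / \|g\|_{A^{-1}} \in \cB^d(1)$ (assuming $g\neq 0$; if $g = 0$ then $x^\star$ is an interior maximizer, forcing $f$ constant and the bound is immediate), the event becomes exactly $\{\langle u, z\rangle \ge 1\}$, whose probability is at least $1/(4\sqrt{e\pi})$ by the anti-concentration part of Lemma~\ref{lem:concentration_anticoncentration}. Chaining the inclusions gives $\Pr[f(\tilde x) \ge \max_{x\in\cE} f(x)] \ge 1/(4\sqrt{e\pi})$.

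The main obstacle I anticipate is not any hard estimate but rather making the supporting-hyperplane argument fully rigorous: I must justify that the maximizer $x^\star$ lies on $\relbd \cE$ and that a subgradient $g$ at $x^\star$ serves as a valid outward normal to $\cE$ there (so that $\langle g, x - x^\star\rangle \le 0$ on $\cE$), and I must cleanly dispose of the degenerate cases where $g = 0$ or $f$ is constant. Once this is set up, identifying $x^\star = x_0 + bA^{-1}g/\|g\|_{A^{-1}}$ via the support function of an ellipsoid and the substitution to a standard normal are routine. A minor point to handle carefully is that $A^{-1/2}$ is well-defined and symmetric positive definite since $A$ is, which is what lets me rewrite the Mahalanobis norm $\|g\|_{A^{-1}} = \|A^{-1/2}g\|$ and normalize to obtain $u$ in the unit ball.
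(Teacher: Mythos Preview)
Your proposal is correct and follows essentially the same approach as the paper: both arguments hinge on the fact that a subgradient of $f$ at the boundary maximizer lies in the normal cone of the constraint set, then use the subgradient inequality to reduce the event to a one-dimensional Gaussian tail and invoke Lemma~\ref{lem:concentration_anticoncentration}. The only cosmetic difference is that the paper first changes variables to the unit ball (so the normal cone is simply $\{\lambda\bar z:\lambda\ge 0\}$), whereas you work directly in the ellipsoid and identify $x^\star$ via the support function; both routes yield the same inequality $\langle u,z\rangle\ge 1$.
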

\begin{proof}
    Note that by definition of $\thetatilde$ we have $\thetatilde \stackrel{d}{=} \theta_0 + b V^{-1/2} \tilde z$, where $\tilde z\sim \cN(0, I)$. Hence, considering $g(z):= f(\theta_0 + b V^{-1/2} z)$, we have 
    \begin{equation*}
        p := \Pr\bs{f(\thetatilde)\geq \max_{\theta\in\cE}f(\theta)} = \Pr\bs{g(\tilde z)\geq \max_{z\in\cB^d(1)}g(z)},
    \end{equation*}
    where we used that $\theta_0 + b V^{-1/2} z \in \cE$ if and only if $z\in\cB^d(1)$.
    Since $g$ is a continuous convex function, we can choose $\bar z\in\argmax_{z\in\cB^d(1)} g(z)$ such that $\norm{\bar z}=1$. By optimality of $\bar z$ it holds that
\begin{equation}\label{eq:opt_condition}
    \partial g(\bar z) \subseteq N_{\cB^d(1)}(\bar z),
\end{equation}
where $N_{\cB^d(1)}(\bar z)= \bc{h\in\R^d: 0 \geq \ip{h}{z - \bar z}, \forall z\in\cB^d(1)} = \bc{h\in\R^d: h=\lambda \bar z, \lambda\geq 0}$ denotes the normal cone to $\cB^d(1)$ at $\bar z$. While this optimality condition for convex function maximization is somewhat standard (see \eg \citep[Theorem 32.4]{rockafellar1997convex}), we can verify directly that by optimality of $\bar z$ we have
\begin{align*}
    \partial g(\bar z) &:= \bc{h\in\R^d: g(z) - g(\bar z) \geq \ip{h}{z - \bar z}, \forall z}\\
    &\subseteq \bc{h\in\R^d: g(z) - g(\bar z) \geq \ip{h}{z - \bar z}, \forall z\in\cB^d(1)}\\
    &\subseteq \bc{h\in\R^d: 0 \geq \ip{h}{z - \bar z}, \forall z\in\cB^d(1)} =: N_{\cB^d(1)}(\bar z).
\end{align*}
Since $g$ is convex and finite on all of $\R^d$, we have $\partial g(\bar z)\neq \emptyset$. Therefore, the inclusion \eqref{eq:opt_condition} implies that
\begin{equation*}
    g(\tilde z) \geq g(\bar z) + \ip{\lambda \bar z}{\tilde z - \bar z} = g(\bar z) + \lambda(\ip{ \bar z}{\tilde z }-1), \quad \text{ for some } \lambda \geq 0.
\end{equation*}
Therefore, $\ip{\bar z}{\tilde z}\geq 1$ implies that $g(\tilde z) \geq g(\bar z)$, which yields the lower bound
\begin{equation*}
    p = \Pr\bs{g(\tilde z) \geq g(\bar z) } \geq \Pr\bs{\ip{\tilde z}{\bar z} \geq 1 }.
\end{equation*}
In light of Lemma~\ref{lem:concentration_anticoncentration}, this establishes $p \geq 1 / (4\sqrt{e\pi})$.
\end{proof}

The above Lemma is helpful in the following way: Let $\theta^*\in\cE$, $\thetatilde \sim \cN(\theta_0, b^2 V^{-1})$, and $x_{\thetatilde} = \argmax_{x\in\cX} \ip{\thetatilde}{x}$ for some bounded subset $\cX\subset \R^d$ (assuming the maximum exists). If $f$ is the support function of $\cX$, \ie $f(\theta) =  \max_{x\in\cX} \ip{\theta}{x}$, then we have with probability at least $1/\br{4\sqrt{e\pi}}$ that
\begin{equation*}
    \max_{x\in\cX} \ip{\theta^*}{x} = f(\theta^*) \leq \max_{\theta \in\cE} f(\theta) \leq  f(\thetatilde) = \ip{\thetatilde}{x_{\thetatilde}}.
\end{equation*}
Moreover, by part 2. of Lemma~\ref{lem:concentration_anticoncentration}, we still have that $\norm{\thetatilde-\theta_0}_V\leq \tilde\cO\br{\sqrt{d}b}$ with high probability. This idea will be key to the proofs of Theorems \ref{thm:regret_thompson_sampling}, \ref{thm:last_iterate}, and \ref{thm:regret_lazy_thompson_sampling}.

\subsection{Elliptical potential bounds}
\label{app:elliptical_potential}
Another key component of the convergence proofs are the following so-called elliptical potential lemmas that provide an upper bound on the sum of norms of sequentially observed vectors in the norm induced by their design matrix.
\begin{lemma}[Lemma 19.4 of \citet{lattimore2020bandit}]
\label{lem:standard_elliptical_lemma}
    Let $\br{x_t}_{t\ge 1} \subset \bR^d$ and for all $t\ge 1$, $\norm{x_t} \le L$, let $V_t = \lambda I + \sum_{k=1}^{t-1} x_k x_k^\top$ for some $\lambda>0$. Then, 
    \begin{align*}
        \sum_{t=1}^T \min\{1, \norm{ x_t }_{V_{t}^{-1}}^2 \} \le \: & 2 d \log\left(1+ \frac{T L^2}{d \lambda}\right).
    \end{align*}
\end{lemma}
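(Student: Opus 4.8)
\textbf{Proof proposal for Lemma~\ref{lem:standard_elliptical_lemma}.}

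The plan is to follow the classical potential argument: relate each term $\min\{1, \norm{x_t}_{V_t^{-1}}^2\}$ to the multiplicative increase of $\det(V_{t+1})$ over $\det(V_t)$, telescope the product, and finally bound the total $\log$-determinant ratio using the trace constraint coming from $\norm{x_t}\le L$. The key algebraic identity is the rank-one update formula: since $V_{t+1} = V_t + x_t x_t^\top$, we have $\det(V_{t+1}) = \det(V_t)\bigl(1 + \norm{x_t}_{V_t^{-1}}^2\bigr)$, which follows from the matrix determinant lemma after writing $V_{t+1} = V_t^{1/2}(I + V_t^{-1/2}x_t x_t^\top V_t^{-1/2})V_t^{1/2}$ and noting the bracketed matrix has eigenvalues $1$ (with multiplicity $d-1$) and $1+\norm{x_t}_{V_t^{-1}}^2$.

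The next step is to pass from the squared norm to its clipped version. I would use the elementary inequality $\min\{1,u\} \le 2\log(1+u)$ valid for all $u\ge 0$ (which holds because at $u=1$ both sides give $1$ vs.\ $2\log 2 > 1$, and one checks the bound on $[0,1]$ and $[1,\infty)$ separately via monotonicity/concavity). Applying this with $u = \norm{x_t}_{V_t^{-1}}^2$ gives
\begin{equation*}
    \sum_{t=1}^T \min\{1, \norm{x_t}_{V_t^{-1}}^2\} \le 2\sum_{t=1}^T \log\bigl(1+\norm{x_t}_{V_t^{-1}}^2\bigr) = 2\sum_{t=1}^T \log\frac{\det(V_{t+1})}{\det(V_t)} = 2\log\frac{\det(V_{T+1})}{\det(V_1)},
\end{equation*}
where the middle equality is the determinant identity above and the last is telescoping; note $V_1 = \lambda I$ so $\det(V_1) = \lambda^d$.

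Finally I would bound $\det(V_{T+1})$. Since $V_{T+1} = \lambda I + \sum_{s=0}^{T-1} x_s x_s^\top$ (indexing as in the statement) is positive definite with $\tr(V_{T+1}) = d\lambda + \sum_{s} \norm{x_s}^2 \le d\lambda + TL^2$, the AM--GM inequality on its eigenvalues gives $\det(V_{T+1}) \le \bigl(\tr(V_{T+1})/d\bigr)^d \le (\lambda + TL^2/d)^d$. Combining, $2\log(\det(V_{T+1})/\lambda^d) \le 2d\log(1 + TL^2/(d\lambda))$, which is the claimed bound. This argument is entirely standard; the only point requiring minor care is the scalar inequality $\min\{1,u\}\le 2\log(1+u)$ and keeping the indexing/off-by-one conventions consistent between $V_t$ and the sum bounds, but there is no genuine obstacle here since the result is quoted verbatim from \citet{lattimore2020bandit}.
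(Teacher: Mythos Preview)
Your proof is correct and is exactly the standard textbook argument. The paper does not prove this lemma at all---it is simply quoted from \citet{lattimore2020bandit}---so there is nothing to compare against; your potential/telescoping argument via the matrix determinant lemma and AM--GM on eigenvalues is precisely the proof given in that reference. One small slip: with the stated indexing $V_1 = \lambda I + x_0 x_0^\top$, not $\lambda I$, but since $\det(V_1)\ge \lambda^d$ this only strengthens the bound, and you already flag the off-by-one bookkeeping as the only point needing care.
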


In the analysis of our algorithms, a central challenge is to control the norms of policy feature differences $\phi(\pi_t) - \phi(\pi_t')$. However, the learner only observes the trajectory-level differences $x_t = \phi(\tau_t) - \phi(\tau_t')$, which are random variables with mean $\phi(\pi_t) - \phi(\pi_t')$. To overcome this, we build on Lemma \ref{lem:standard_elliptical_lemma} and introduce new tools to bound the sum of norms of policy feature differences. 

\begin{lemma}[Elliptical lemma]
\label{lem:elliptical_lemma}    
Let $\{x_t\}_{t\ge 1} \subset \R^d$ be a sequence of random vectors adapted to a filtration
$\{\cF_{t}\}_{t\ge 1}$, and let $\|x_t\|\le L$ almost surely for all $t\ge 1$. Let $V_t = \lambda I + \sum_{k=1}^{t-1} x_k x_k^\top$ for some $\lambda>0$. Then, the following holds:

\smallskip
\noindent
\textbf{1) Deterministic bound.} For all $T\in\bN$, almost surely,
\begin{align*}
    \sum_{t=1}^T \|x_t\|_{V_t^{-1}}^2
    \;\le\;
    2 d \log\!\Bigl(1+ \frac{T L^2}{d \lambda}\Bigr)
    \;+\;
    \frac{3 d L^2}{\lambda \log 2}\,
    \log\!\Bigl(1+\frac{L^2}{\lambda \log 2}\Bigr).
\end{align*}

\smallskip
\noindent
\textbf{2) High-probability bound.} For all $\delta\in (0,1)$, with probability at least $1-\delta$,
\begin{align*}
    \forall T\in\bN,\quad
    \sum_{t=1}^T \E\!\big[\|x_t\|_{V_t^{-1}} \,\big|\, \cF_{t-1}\big]
    &\le
    2\sqrt{T \br{2 d \log\left(1 \!+\! \frac{T L^2}{d \lambda}\right) \!\!+\! \frac{3 d L^2}{ \lambda \log 2} \log\left(1\!+\!\frac{L^2}{\lambda \log 2}\right)}} \\
    &\quad+ \frac{8 L}{\sqrt{\lambda}} \log(1/\delta).
\end{align*}
\end{lemma}

The first statement is a small improvement over Lemma \ref{lem:standard_elliptical_lemma} because it involves $\norm{ x_t }_{V_{t}^{-1}}^2$ instead of $\min\{1, \norm{ x_t }_{V_{t}^{-1}}^2\}$ and maintains a similar upper bound. In the second statement, $\{x_t\}_{t\ge1}$ represent trajectory features and their expected values are policy features. Hence, the second statement of the lemma above allows us to control the elliptical potentials of the policy features while only observing trajectory features.

\begin{proof}
    \textbf{First statement:} The proof of this result is based on the observation in \cite[Exercise 19.3]{lattimore2020bandit}. Namely, the number of times the term $\norm{x_t}_{{V}_{t}^{-1}}^2$ can be larger than one is at most $\frac{3 d}{\log(2)} \log(1+\frac{L^2}{\lambda \log(2)})$. 

    Let's define the rounds $\cT_T = \{t\le T, \norm{x_t}_{V_{t}^{-1}}^2 \le 1\}$, we have:
    \begin{align*}
        \sum_{t=1}^T \norm{x_t}_{V_{t}^{-1}}^2 &= \sum_{t\in \cT_T} \norm{x_t}_{V_{t}^{-1}}^2 + \sum_{t \notin \cT_T} \norm{x_t}_{V_{t}^{-1}}^2\\
        &\le \sum_{t\in \cT_T} \min\{1, \norm{x_t }_{V_{t}^{-1}}^2 \} + \frac{3 d L^2}{\lambda \log(2)} \log(1+\frac{L^2}{\lambda \log(2)}),
    \end{align*}
    where the first term of the last inequality follows by definition of $\cT_T$. The second term follows because the number of times $1\le t \le T$ not in $\cT_T$ is at most $\frac{3 d}{\log(2)} \log(1+\frac{L^2}{\lambda \log(2)})$ as previously discussed, and because $\norm{x_t}_{V_{t}^{-1}}^2 \le L^2 / \lambda$. Then, the proof is concluded by bounding the first sum on the right-hand side using Lemma \ref{lem:standard_elliptical_lemma}.

    \textbf{Second statement:} The proof proceeds by using Lemma \ref{lem:freedman}. We have for any $\delta\in (0,1)$ that with probability $1-\delta$:
    \begin{align*}
        \sum_{t=1}^T \bE[\norm{x_t }_{V_{t}^{-1}} | \cF_{t-1}] &\le 2 \sum_{t=1}^T \norm{x_t }_{V_{t}^{-1}} + \frac{8 L}{\sqrt{\lambda}} \log(1/\delta)\\
        &\le 2 \sqrt{T \sum_{t=1}^T \norm{x_t }_{V_{t}^{-1}}^2} + \frac{8 L}{\sqrt{\lambda}} \log(1/\delta)\\
        &\le 2 \sqrt{T \br{2 d \log\left(1 \!+\! \frac{T L^2}{d \lambda}\right) \!\!+\! \frac{3 d L^2}{\log(2) \lambda} \log\left(1\!+\!\frac{L^2}{\log(2) \lambda}\right)}} + \frac{8 L}{\sqrt{\lambda}} \log(1/\delta),
    \end{align*}
    where the first inequality uses Lemma \ref{lem:freedman}, the second uses the Cauchy-Schwarz inequality, and the last follows from the first result of the lemma.
\end{proof}

We now present a variant of the elliptical potential lemma above, adapted for the case where the design matrix is updated with lazy updates and optimal design; see Algorithm \ref{alg:lazy_thompson_sampling} for more details.

\begin{lemma}[Lazy elliptical lemma]
\label{lem:lazy_elliptical_lemma}
Let $(x_t)_{t\ge 1} \subset \R^d$ be a sequence of random vectors adapted to a filtration
$(\cF_{t})_{t\ge 1}$, with $\|x_t\|\le L$ almost surely.  
Fix $\lambda\geq L^2$ and $C>0$.

Define $(V_t,W_t)_{t\ge 1}$ as in Algorithm~\ref{alg:lazy_thompson_sampling}:
\begin{itemize}
    \item $V_1 = W_1 = \lambda I$, $\cD_1 = \emptyset$, $\ts=1$.
    \item For $t=1,\dots,T$:
    \begin{itemize}
        \item If $\det(W_t) > (1+C)\det(V_{\ts})$: 
        \begin{itemize}
            \item Let $V_t = V_{t-1} + \sum_{x\in \cD_{\text{opt}}} x x^\top$ be such that $\det V_t \geq \det W_t$ and $\abs{\cD_{\text{opt}}} \leq \abs{\cD}$.
            \item Set $\ts = t$, $\cD = \emptyset$
        \end{itemize}
        \item Observe $x_t$ and append $\cD = \cD \cup \bc{x_t}$.
        \item Update
        \[
            W_{t+1} =
            \begin{cases}
            V_t + x_t x_t^\top, & \text{if } t = \ts,\\
            W_t + x_t x_t^\top, & \text{otherwise,}
            \end{cases} \quad V_{t+1} = V_t.
        \]
    \end{itemize}
\end{itemize}

Then, the following holds:

\medskip\noindent
\textbf{1) Deterministic bound.}
For all $T\in\bN$,
\begin{align*}
\sum_{t=1}^T \|x_t\|_{V_t^{-1}}^2
&\le 
\dfrac{2(1+C)\,d\,L^2}{\lambda}\log\!\Bigl(1 + \frac{T L^2}{\lambda d}\Bigr).
\end{align*}

\medskip\noindent
\textbf{2) High-probability bound.}
For a fixed $T\in\bN$ and any $\delta\in (0,1)$, with probability at least $1-\delta$,
\begin{align*}
\sum_{t=1}^T \E\bigl[\|x_t\|_{V_t^{-1}} \mid \cF_{t-1}\bigr]
&\le 2
\sqrt{\dfrac{2(1+C)\,d T\,L^2}{\lambda}\log\!\Bigl(1 + \frac{T L^2}{\lambda d}\Bigr)}+ \frac{8L}{\sqrt{\lambda}} \log(1/\delta).
\end{align*}
\end{lemma}

\begin{proof}
    \textbf{First statement:} First note that $V_t \succeq \lambda I$ for all $t$, hence
$\|x_t\|_{V_t^{-1}}^2 \le L^2/\lambda$. For non-update steps, $t \neq \ts$, we have $V_t \preceq W_t$ and $\det(W_t) \le (1+C)\det(V_t)$. Hence, by Lemma~\ref{lem:inequality_of_matrix_norms}
\begin{equation}
\label{eq:design_smaller_than_naive}
    \forall x \in \bR^d, \quad \norm{x}_{V_{t}^{-1}}^2 \le \frac{\det(V_{t}^{-1})}{\det(W_{t}^{-1})}\norm{x}_{W_{t}^{-1}}^2  \leq (1+C) \norm{x}_{W_{t}^{-1}}^2.
\end{equation}
Using the inequality $\min\{1,z\}\le 2\log(1+z)$ for $z\ge0$, we get 
\begin{align*}
    \min\{1,\|x_t\|_{V_t^{-1}}^2\}
&\leq
(1+C)\,\min\{1,\|x_t\|_{W_t^{-1}}^2\} \leq 2(1+C) \log\br{1+\|x_t\|_{W_t^{-1}}^2}\\
&= 2(1+C) \log \frac{\det\br{W_{t+1}}}{\det\br{W_t}},
\end{align*}
where the last equality follows from $W_{t+1} = W_t + x_t x_t^\top$ and the matrix determinant lemma (see \citep[Lemma 11]{abbasi2011improved}). 

For update steps, $t=\ts$, we have by construction $\det(V_t) \geq \det(W_t)$ and $W_{t+1} = V_t + x_t x_t^\top$. Therefore, with the same reasoning as above,
\begin{equation*}
     \min\{1,\|x_t\|_{V_t^{-1}}^2\} \leq 2 \log\br{1+\|x_t\|_{V_t^{-1}}^2} = 2\log \frac{\det\br{W_{t+1}}}{\det\br{V_t}} \leq 2\log \frac{\det\br{W_{t+1}}}{\det\br{W_t}}.
\end{equation*}

By summing over $t$ and telescoping, we therefore get
\begin{equation*}
    \sum_{t=1}^T  \min\{1,\|x_t\|_{V_t^{-1}}^2\} \leq 2(1+C) \log \frac{\det\br{W_{T+1}}}{\det\br{W_1}} \leq 2(1+C) d\log\br{1+\dfrac{TL^2}{\lambda d}}.
\end{equation*}
Using that $\norm{x_t}_{V_t^{-1}}^2 \leq (L^2/ \lambda) \min\{1,\|x_t\|_{V_t^{-1}}^2\}$ concludes the proof of the first result. 

    \textbf{Second statement:} Analogously to Lemma~\ref{lem:elliptical_lemma}, the second statement follows from the first statement and Lemma~\ref{lem:freedman}.
\end{proof}
\begin{remark}
    Note that we have not used the trick from \cite[Exercise 19.3]{lattimore2020bandit} in the proof above, as it doesn't trivially extend to our lazy setting. 
\end{remark}

\subsection{Miscellaneous}

\begin{proposition}\label{prop:convex}
    The function $f(\theta) = \sup_{\pi \in \br{\Delta_{\cA}}^\cS} \: \ip{\theta}{\phi(\pi)}$ is convex and continuous over $\R^d$.
\end{proposition}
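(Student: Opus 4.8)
The approach is to observe that $f$ is a pointwise supremum of linear functions of $\theta$, and that it takes finite values everywhere; convexity then comes for free, and continuity follows either from a standard convex-analysis fact or from a direct Lipschitz estimate. The only quantitative input needed is a uniform bound on the feature expectations $\phi(\pi)$. Concretely, for any policy $\pi$ and any trajectory $\tau=(s_0,a_0,s_1,\dots)$, the assumption $\max_{s,a}\norm{\phi(s,a)}\le L$ gives $\norm{\phi(\tau)}\le\sum_{h\ge0}\gamma^h L=L\effH$, and hence $\norm{\phi(\pi)}=\norm{\E_{\tau\sim\Ppi}[\phi(\tau)]}\le L\effH$ for every $\pi\in(\Delta_{\cA})^{\cS}$. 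Consequently $|\ip{\theta}{\phi(\pi)}|\le\norm{\theta}\,L\effH$, so $f(\theta)\le\norm{\theta}\,L\effH<\infty$ and $f$ is a well-defined real-valued function on $\R^d$.

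\textbf{Key steps.} (i) \emph{Convexity}: for each fixed $\pi$ the map $\theta\mapsto\ip{\theta}{\phi(\pi)}$ is linear, hence convex; a pointwise supremum of an arbitrary family of convex functions is convex, so $f=\sup_{\pi}\ip{\cdot}{\phi(\pi)}$ is convex on $\R^d$. (ii) \emph{Continuity}: one route is to invoke the fact that a convex function which is finite on all of $\R^d$ is automatically continuous (indeed locally Lipschitz), e.g. \citep[Theorem 10.1]{rockafellar1997convex}, which applies by the finiteness established above. A more self-contained route is the direct estimate: for $\theta_1,\theta_2\in\R^d$ and any $\pi$, $\ip{\theta_1}{\phi(\pi)}=\ip{\theta_2}{\phi(\pi)}+\ip{\theta_1-\theta_2}{\phi(\pi)}\le f(\theta_2)+\norm{\theta_1-\theta_2}\,L\effH$ by Cauchy–Schwarz and the bound on $\norm{\phi(\pi)}$; taking the supremum over $\pi$ on the left gives $f(\theta_1)-f(\theta_2)\le L\effH\norm{\theta_1-\theta_2}$, and by symmetry $|f(\theta_1)-f(\theta_2)|\le L\effH\norm{\theta_1-\theta_2}$, so $f$ is $L\effH$-Lipschitz and in particular continuous.

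\textbf{Main obstacle.} There is essentially no obstacle: the proof is a routine application of standard convex-analytic facts. The single point requiring care is the uniform bound $\norm{\phi(\pi)}\le L\effH$, which relies on the discount factor $\gamma<1$ together with the feature bound $L$; it is precisely this bound that upgrades $f$ from a merely lower-semicontinuous supremum to a finite, globally Lipschitz convex function. (Note that we do not need the supremum to be attained, so no compactness or measurable-selection argument over the policy class is required.)
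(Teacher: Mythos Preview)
Your proposal is correct and essentially identical to the paper's own proof: both recognize $f$ as a supremum of linear functions (the paper calls it the support function of $\{\phi(\pi)\}$), verify convexity from this, and then deduce continuity from the bound $|f(\theta)|\le L\effH\norm{\theta}$ via \citep[Theorem 10.1]{rockafellar1997convex}. Your additional direct Lipschitz estimate is a nice self-contained alternative that the paper does not include, but the core argument is the same.
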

\begin{proof}
    The function $f$ is the support function of the set $\cZ = \bc{\phi(\pi): \pi:\cS\to\Delta_{\cA}}$. To prove the convexity, note that for any $\eta\in(0,1)$, we have
    \begin{equation*}
        f(\eta\theta + (1-\eta) \theta') \leq \eta\sup_{z\in\cZ} \ip{\theta}{z} + (1-\eta)\sup_{z\in\cZ}\ip{\theta'}{z}\leq \eta f(\theta) + (1-\eta)f(\theta').
    \end{equation*}
    Furthermore, any convex function is continuous over the relative interior of its effective domain $\dom f = \bc{x: f(x)< \infty}$ (see \eg \citep[Theorem 10.1]{rockafellar1997convex}). Since $|f(\theta)|\leq \norm{\theta}L\effH$, this implies that $f$ is continuous over $\R^d$.
\end{proof}

\begin{lemma}[Lemma 12 of \citep{abbasi2011improved}]
\label{lem:inequality_of_matrix_norms} 
    Let $A, B$, and $C$ be positive semi-definite matrices such that $A = B+C$. Then, we have that:
    \begin{equation*}
        \sup_{x \neq 0}  \frac{x^T A x}{x^T B x} \le \frac{\det(A)}{\det(B)}.
    \end{equation*}
\end{lemma}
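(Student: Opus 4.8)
The plan is to reduce this matrix inequality to an elementary statement about the generalized eigenvalues of the pencil $(A,B)$. First observe that for the right-hand side to be well-defined we need $\det B \neq 0$, i.e.\ $B$ positive definite (not merely positive semi-definite); this is harmless in all our applications, since there $B$ is always of the form $\lambda I + (\text{PSD})$ with $\lambda>0$. Assuming $B \succ 0$, use simultaneous diagonalization: take $P = B^{-1/2} Q$ where $Q$ orthogonally diagonalizes the symmetric PSD matrix $B^{-1/2} A B^{-1/2}$; then $P$ is invertible, $P^\top B P = I$, and $P^\top A P = \diag(\mu_1,\dots,\mu_d) =: D$, where $\mu_1,\dots,\mu_d > 0$ are the generalized eigenvalues of $(A,B)$.

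Next, rewrite both sides in terms of the $\mu_i$. Substituting $x = Pz$, the Rayleigh-type quotient becomes $\sup_{x\neq 0}\frac{x^\top A x}{x^\top B x} = \sup_{z\neq 0}\frac{z^\top D z}{z^\top z} = \max_i \mu_i$. For the determinant ratio, from $A = P^{-\top} D P^{-1}$ and $B = P^{-\top} P^{-1}$ we get $\det A / \det B = \big(\det(P)^{-2}\prod_i \mu_i\big) \big/ \det(P)^{-2} = \prod_i \mu_i$.

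Finally, invoke the hypothesis $A = B + C$ with $C \succeq 0$: this gives $A \succeq B$, hence $D = P^\top A P \succeq P^\top B P = I$, i.e.\ $\mu_i \ge 1$ for every $i$. Since every factor in $\prod_i \mu_i$ is at least $1$, dropping all factors except the index achieving the maximum yields $\prod_i \mu_i \ge \max_i \mu_i$, which is exactly the claimed bound $\det A/\det B \ge \sup_{x\neq 0}\frac{x^\top A x}{x^\top B x}$.

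There is no genuine obstacle here; the only points requiring a little care are (i) the well-definedness issue noted above, and (ii) justifying the simultaneous diagonalization and the resulting generalized-eigenvalue reformulation. An equivalent route that sidesteps $P$ altogether is to set $M := B^{-1/2} C B^{-1/2} \succeq 0$, so that $\sup_{x\neq 0}\frac{x^\top A x}{x^\top B x} = \lambda_{\max}(I+M) = 1 + \lambda_{\max}(M)$ and $\det A/\det B = \det(I+M) = \prod_i \big(1+\lambda_i(M)\big)$, and then conclude from $\lambda_i(M)\ge 0$ that the product dominates $1+\lambda_{\max}(M)$.
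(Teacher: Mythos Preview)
Your proof is correct. Note that the paper itself does not supply a proof of this lemma---it is simply quoted as Lemma~12 of \cite{abbasi2011improved}---so there is no in-paper argument to compare against. That said, your approach via the generalized eigenvalues $\mu_i$ of the pencil $(A,B)$ (equivalently, the eigenvalues of $B^{-1/2}AB^{-1/2}$), together with the observation that $A\succeq B$ forces $\mu_i\ge 1$ and hence $\prod_i\mu_i\ge\max_i\mu_i$, is exactly the argument used in the cited source; your ``equivalent route'' through $M=B^{-1/2}CB^{-1/2}$ is in fact the cleaner of the two presentations and matches the original most closely. Your remark that $B$ must be taken positive definite for the statement to even make sense is a valid clarification of the hypotheses.
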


This next lemma is one of the versions of Freedman's inequality \citep{freedman1975tail}.
\begin{lemma}[Lemma 2 of \citet{zhu2022efficient}]
\label{lem:freedman}
    Let $\left(z_t\right)_{t\leq T}$ be a real-valued sequence of random variables adapted to a filtration $\cF_{t}$. If $0\leq z_t \leq B$ almost surely, then with probability at least $1-\delta$,
    \begin{equation*}
        \sum_{t=1}^T z_t \leq \frac{3}{2} \sum_{t=1}^T \mathbb{E}\left[z_t \mid \cF_{t-1}\right]+4 B \log \left(2 \delta^{-1}\right)
    \end{equation*}
    and
    \begin{equation*}
        \sum_{t=1}^T \mathbb{E}\left[z_t  \mid \cF_{t-1}\right] \leq 2 \sum_{t=1}^T z_t+8 B \log \left(2 \delta^{-1}\right).
    \end{equation*}
\end{lemma}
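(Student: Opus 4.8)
Both displayed bounds are one-sided Bernstein-type deviation inequalities, so the plan is to construct, for each sign, an exponential supermartingale in the cumulative sum, apply Markov's inequality to it, and combine the two resulting events by a union bound (this union is precisely what turns $\log(\delta^{-1})$ into $\log(2\delta^{-1})$). Throughout I treat $z_t$ as non-negative and bounded by $B$, which is the regime in which the lemma is invoked (e.g.\ $z_t = \norm{x_t}_{V_t^{-1}} \le L/\sqrt{\lambda}$ in Lemma~\ref{lem:elliptical_lemma}); non-negativity is essential because it supplies a variance proxy that is \emph{linear in the conditional mean}, namely $z_t^2 \le B z_t$ and hence $\E_t[z_t^2] \le B\,\E_t[z_t]$, writing $\E_t[\cdot]$ for the conditional expectation given the past so that $z_t-\E_t[z_t]$ is a martingale difference. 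This linear-in-mean control is exactly what converts the usual additive variance term into the multiplicative factors $3/2$ and $2$.

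\textbf{Per-step MGF bounds.} Setting $\mu_t := \E_t[z_t]$, I would control the conditional moment generating function in each direction. For the upper tail, using $e^{u} \le 1 + u + (e-2)u^2$ (valid for $u \le 1$) with $u = \lambda z_t$ and $0 \le \lambda \le 1/B$, together with $\E_t[z_t^2] \le B\mu_t$,
\begin{equation*}
    \E_t\bs{e^{\lambda z_t}} \le 1 + \lambda\mu_t + (e-2)\lambda^2 B\mu_t \le \exp\br{\br{\lambda + (e-2)\lambda^2 B}\mu_t}.
\end{equation*}
For the lower tail, using $e^{-u} \le 1 - u + \tfrac12 u^2$ (valid for all $u \ge 0$) with $u = \lambda z_t$,
\begin{equation*}
    \E_t\bs{e^{-\lambda z_t}} \le \exp\br{\br{-\lambda + \tfrac{1}{2}\lambda^2 B}\mu_t}.
\end{equation*}

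\textbf{Supermartingale, Markov, and tuning.} With $S_T := \sum_{t=1}^T z_t$ and $A_T := \sum_{t=1}^T \mu_t$, I would consider
\begin{equation*}
    M_T^{+} := \exp\br{\lambda S_T - \br{\lambda + (e-2)\lambda^2 B} A_T}, \qquad M_T^{-} := \exp\br{-\lambda S_T + \br{\lambda - \tfrac12\lambda^2 B} A_T}.
\end{equation*}
Conditioning on the past and invoking the two MGF bounds gives $\E_T[M_T^{\pm}] \le M_{T-1}^{\pm}$, so iterating down to $M_0^\pm = 1$ yields $\E[M_T^{\pm}]\le 1$; Markov's inequality then shows $M_T^\pm < 2/\delta$ with probability at least $1-\delta/2$ for each sign. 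On the event $M_T^+ < 2/\delta$, taking logarithms gives $S_T \le (1 + (e-2)\lambda B)A_T + \lambda^{-1}\log(2/\delta)$, and the choice $\lambda = 1/(2(e-2)B)$ (which lies in $(0,1/B]$ since $e-2\ge\tfrac12$) makes the coefficient $3/2$ and the additive term $2(e-2)B\log(2/\delta)\le 4B\log(2/\delta)$, yielding the first inequality. Symmetrically, on $M_T^- < 2/\delta$ one gets $(1-\tfrac12\lambda B)A_T \le S_T + \lambda^{-1}\log(2/\delta)$, and $\lambda = 1/B$ gives $A_T \le 2S_T + 2B\log(2/\delta)\le 2S_T + 8B\log(2/\delta)$, the second inequality. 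A union bound over the two events gives the stated probability $1-\delta$.

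\textbf{Main obstacle.} The only genuine subtlety is the per-step step: replacing an additive variance term by the linear-in-mean proxy $\E_t[z_t^2]\le B\mu_t$, which relies on $z_t\ge 0$ (so the hypothesis is effectively $0\le z_t\le B$, matching every application in the paper), and ensuring the tuned $\lambda$ stays inside the validity range of the Taylor estimates. Once these are secured, the supermartingale property and the rearrangement are entirely routine, and the constants are loose enough to absorb the precise Taylor coefficients.
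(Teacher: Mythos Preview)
The paper does not prove this lemma; it is quoted verbatim as Lemma~2 of \citet{zhu2022efficient} and used as a black box in Lemma~\ref{lem:elliptical_lemma} and Lemma~\ref{lem:lazy_elliptical_lemma}. So there is no ``paper's own proof'' to compare against.

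Your argument is the standard multiplicative Bernstein/Freedman route and is correct under the assumption $0\le z_t\le B$. You are also right that non-negativity is not merely a convenience but a genuine hypothesis: as stated with only $|z_t|\le B$, the first inequality is false (take $z_t\equiv -B$ deterministically; then $\sum z_t = -TB$ while $\tfrac32\sum\E_t[z_t]+4B\log(2/\delta)=-\tfrac32 TB + 4B\log(2/\delta)$, which is smaller for large $T$). So your reading that the intended hypothesis is $0\le z_t\le B$, matching every invocation in the paper, is the correct one, and your proof establishes exactly that version.

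Two minor technical checks in your write-up are fine: the choice $\lambda=1/(2(e-2)B)$ indeed lies in $(0,1/B]$ since $2(e-2)>1$, and the additive constant $2(e-2)B\log(2/\delta)\approx 1.44\,B\log(2/\delta)$ is comfortably below $4B\log(2/\delta)$; likewise $\lambda=1/B$ in the lower-tail direction gives $2B\log(2/\delta)\le 8B\log(2/\delta)$. The supermartingale step uses that $\mu_t$ is measurable with respect to the past, which you implicitly assume; it would be worth stating explicitly that $\E_t[\cdot]$ denotes $\E[\cdot\mid\cF_{t-1}]$ so that $\mu_t$ can be pulled out of the conditional expectation.
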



\section{Discussion and background}

\subsection{Background on optimal experimental design}
\label{ap:background_optimal_design}

Given a possibly infinite set of features $\cX \subset \bR^d$, a D-optimal design is defined as a distribution $\pi$ such that:
\begin{equation*}
    \pi \in \argmax_{\pi \in \Delta_{\cX}} \log\det\br{\sum_{x\in\cX} \pi(x) x x^\top}.
\end{equation*}
The Kiefer-Wolfowitz theorem, see \citep{kiefer1960equivalence}, shows that a D-optimal design also ensures that $\max_{x\in\cX} \norm{x}_{\br{\sum_{x\in\cX} \pi(x) x x^\top}^{-1}}^2 = d$. A direct consequence is that $\cX$ is a subset of the ellipsoid $\{x \in \bR^d: \norm{x}_{(\sum_{x\in\cX} \pi(x) x x^\top)^{-1}}^2 \le d\}$. The Kiefer-Wolfowitz theorem can be interpreted by saying that the set $\{x \in \bR^d: \norm{x}_{(\sum_{x\in\cX} \pi(x) x x^\top)^{-1}}^2 \le d\}$ is the minimum volume ellipsoid containing $\cX$, see \cite[Theorem 21.1]{lattimore2020bandit}. 

In the case where we have a budget of $n$ samples for the design, we can define D-optimal design as the best allocation $\{n_x\}_{x\in\cX} \in \bN$ of the $n$ samples such that $\log\det\br{\sum_{x\in\cX} n_x x x^\top}$ is maximized and $\sum_{x\in\cX} n_x = n$. Finding a D-optimal design with a budget of $n$ is a challenging combinatorial optimization problem. However, there is a key property of the $\log\det$ function that enables an efficient approximation scheme. Namely, for a set $\cX \subset \bR^d$, the set function $S \subset \cX \to \log\det\br{\lambda I + \sum_{x \in S} xx^\top}$ is submodular if $\lambda$ is greater than one. Submodularity is a property describing decreasing additional benefit, known as diminishing returns. Fortunately, a submodular set function can be approximately maximized using a greedy algorithm, \citep{nemhauser1978analysis}. Therefore, using Algorithm \ref{alg:greedy_optimal_design}, we know that if $\ts$ and $\ts'$ are two consecutive update times, then the design matrix satisfies: $\log\det\br{V_{\ts'}} \ge (1-1/e) \max_{S \subset \cX} \log\det\br{V_{\ts}+ \sum_{x \in S} xx^\top}$.

\subsection{On the intractability of optimistic approaches}
\label{app:sec:opt_approaches}

\textbf{Optimism for regret minimization} Optimism in the face of uncertainty is a widely used principle for regret minimization in reinforcement learning. In the bandit setting, optimistic algorithms can be applied directly and yield minimax-optimal regret bounds \citep{auer2002using}. For regret minimization in RLHF, an optimistic algorithm would choose the policy $\pi_t$ to maximize the upper confidence bound on the reward difference relative to a comparator policy $\pi_t'$:
\begin{equation*}
    \pi_t = \argmax_\pi \max_{\theta\in\cE_t} V_{\theta}^\pi - V_{\theta}^{\pi_t'} = \argmax_{\pi} V_{\thetahat_t}^\pi + \beta_t \norm{\phi(\pi) - \phi(\pi_t')}_{V_t^{-1}},
\end{equation*}
where $\cE_t$ is a confidence set for $\theta^*$. This leads to the following bound on the instantaneous regret:
\begin{align*}
    r_t &= \br{V^*_{\theta^*} - V^{\pi_t'}_{\theta^*}} - \br{V^{\pi_t}_{\theta^*} - V^{\pi_t'}_{\theta^*}}\\
    &\leq \br{\max_\pi \max_{\theta\in\cE_t} V_{\theta}^\pi - V_{\theta}^{\pi_t'}} - \br{V^{\pi_t}_{\theta^*} - V^{\pi_t'}_{\theta^*}}\\
    &= \br{V_{\thetatilde_t}^{\pi_t} - V_{\thetatilde_t}^{\pi_t'}} - \br{V^{\pi_t}_{\theta^*} - V^{\pi_t'}_{\theta^*}}\\
    &\leq \norm{\thetatilde_t - \theta^*}_{V_t} \norm{\phi(\pi_t) - \phi(\pi_t')}_{V_t^{-1}} \leq 2\beta_t(\delta') \norm{\phi(\pi_t) - \phi(\pi_t')}_{V_t^{-1}},
\end{align*}
where $\thetatilde_t\in\cE_t$. The cumulative regret can then be bounded using standard elliptical potential arguments (see Lemma~\ref{lem:elliptical_lemma}).

\textbf{Optimism for preference-free exploration} As is evident from the proof of Theorem~\ref{thm:last_iterate}, algorithms that solely maximize an exploration bonus of the form
\begin{equation*}
    \norm{\phi(\pi) - \phi(\pi_t')}_{V_t^{-1}} \quad \text{or} \quad  \E_{\tau\sim\bP_{\pi}, \tau'\sim \bP_{\pi_t'}}\norm{\phi(\tau)-\phi(\tau')}^p_{V_t^{-1}}, p=1,2,
\end{equation*}
are also effective for preference-free exploration. 

\textbf{Challenge of optimizing the exploration bonus} The key problem of the above approaches is that for trajectory-level feedback -- preferences or reward -- they lead to optimization problems over policies that cannot be framed in terms of state-action rewards. \citet{efroni2021reinforcement} therefore conjecture that exactly solving these problems is intractable even in tabular settings. This stands in contrast to standard reinforcement learning with state–action feedback, where optimistic algorithms, and approaches based on optimal design~\citep{wagenmaker2022beyond, mutny2023active}, involve (or can be reduced to) maximization of bonuses of the form
\begin{equation*}
\E_{(s,a)\sim\mu_{\pi}} \norm{\phi(s,a)}^p_{V^{-1}}, p=1,2,
\end{equation*}
which are linear in the occupancy measure $\mu_{\pi}$ and can be maximized via dynamic programming.

These computational challenges associated with exploration under trajectory-level feedback motivate alternative approaches, such as randomized exploration, which can provide near-optimal theoretical guarantees while remaining computationally tractable.

\section{Experiments}\label{app:sec:experiments}
\subsection{Gridworld environment}
As illustrated in Figure~\ref{fig:gridworld_environment} below, the gridworld consists of 36 grid cells and the initial state lies in the center. The agent can choose the actions up, down, left, right, and will deterministically move in that direction or stay if it hits a boundary. The reward features are one-hot features for the six boundary states in blue, and the ground truth reward is $0.5$ for two of these states as indicated in Figure~\ref{fig:gridworld_environment}. Moreover, rewards are discounted with $\gamma = 0.9$. Our gridworld implementation builds on the code by \citet{schlaginhaufen2023identifiability}.

\begin{figure}[h]
    \centering
    \includegraphics[width=0.4\textwidth]{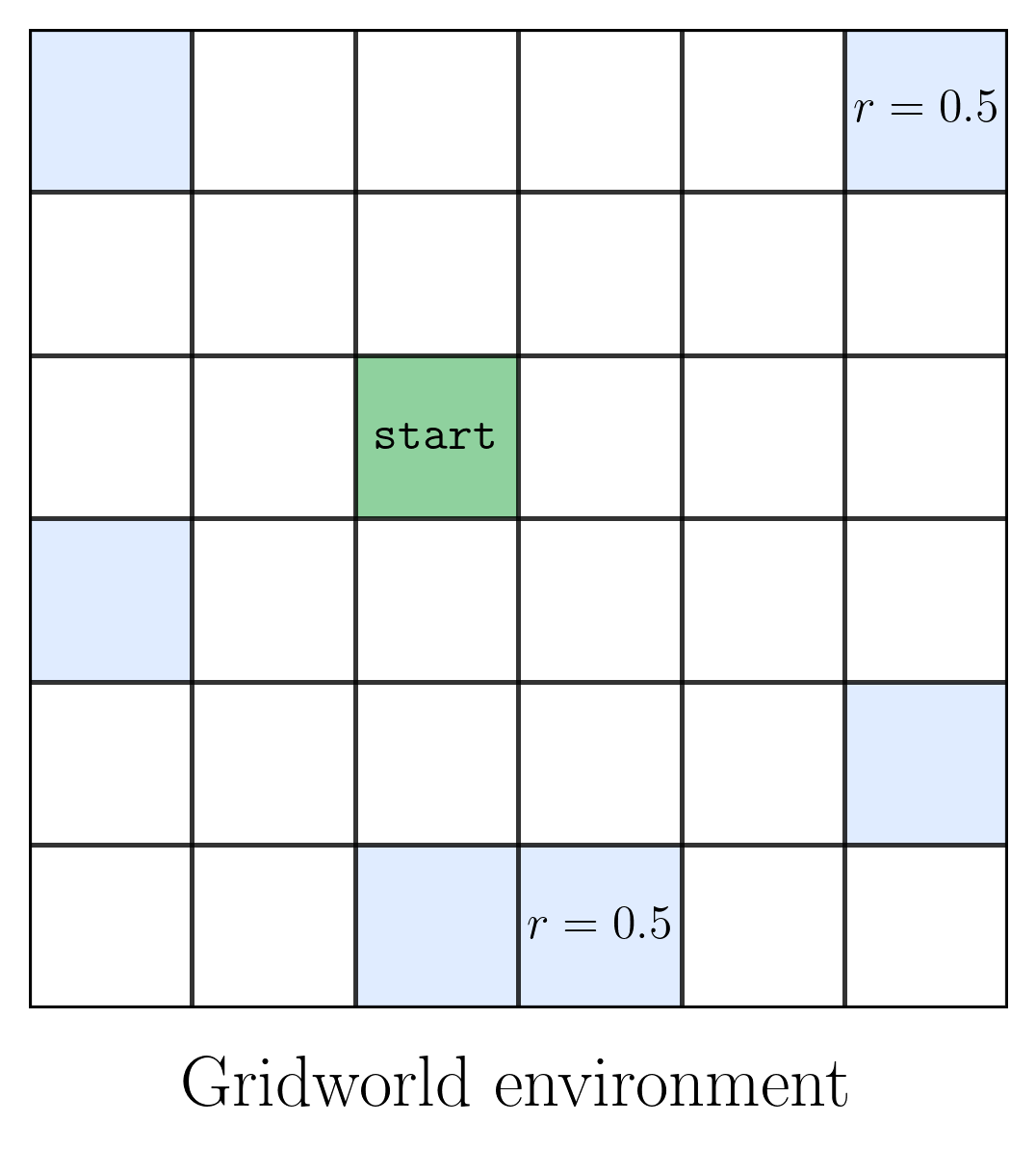}
  \caption{Illustration of the gridworld environment.}\label{fig:gridworld_environment}
\end{figure}

\subsection{Cartpole environment}
Here, we provide additional details for experiments on Isaac Lab's \texttt{Isaac-Cartpole-v0} environment, as well as experimental results for the pure exploration version of our algorithm.
\paragraph{Implementation details}
All experiments run on Isaac Lab’s unmodified \texttt{Isaac-Cartpole-v0} environment using the default PPO configuration. As reward features, we use the pre-defined reward terms. For cartpole these are: 1) Alive term: equal to 1 for all non-terminal states; 2) Termination term: equal to 1 for terminal states. A state is terminal if the cart goes out of bounds; 3) Goal tracking term: absolute value of pole angle measured from the upright position; 4) Cart velocity term: absolute value of cart velocity; 5) Joint velocity term: absolute value of pole angular velocity. However, our implementation supports any Isaac Lab manager‑based tasks.

We train over 30 RLHF iterations, using 30 steps of PPO at each iteration, and training is repeated for 20 independent seeds.
For the randomized exploration, we set $\beta_t = 0.001 + 0.1\max(1,\log t)$ and $\lambda=1$, and for lazy updates we set $C = 0.5$. At each RLHF iteration we compare 100 independently sampled trajectories. For the maximum likelihood estimation we perform 50 Adam steps (batch size $64$, $\ell_2$ penalty $\lambda = 10^{-1}$). 
Experiments were executed on a single machine equipped with an Intel i9-14900KS CPU and an NVIDIA RTX 4090 GPU; completing 30 RLHF iterations required approximately 2\,min\,50\,s.

\paragraph{Preference-free exploration}
Our results for the preference-free exploration algorithm \texttt{RPO-Explore} are shown in Figure~\ref{fig:experiments_pure_exploration} and \ref{fig:experiments_pure_exploration2} below. In Figure~\ref{fig:experiments_pure_exploration}, we see that all three versions of \texttt{RPO-Explore}~achieve performance competitive to RL with the ground truth reward\footnote{Note that the RL baseline makes $24\times 4096\times 50 = 4'915'200$ queries to the ground truth reward, whereas \texttt{RPO} uses at most 150 binary preference queries.}, but \texttt{RPO-OD-Explore}~needs the least preference queries. Moreover, in Figure~\ref{fig:experiments_pure_exploration2} we see that the performance during training is poor, \ie the regret is large, which is to be expected due to the pure exploration scheme.

\begin{figure}[h]
\centering
\begin{subfigure}[t]{0.48\textwidth}
    \centering
    \includegraphics[width=\textwidth]{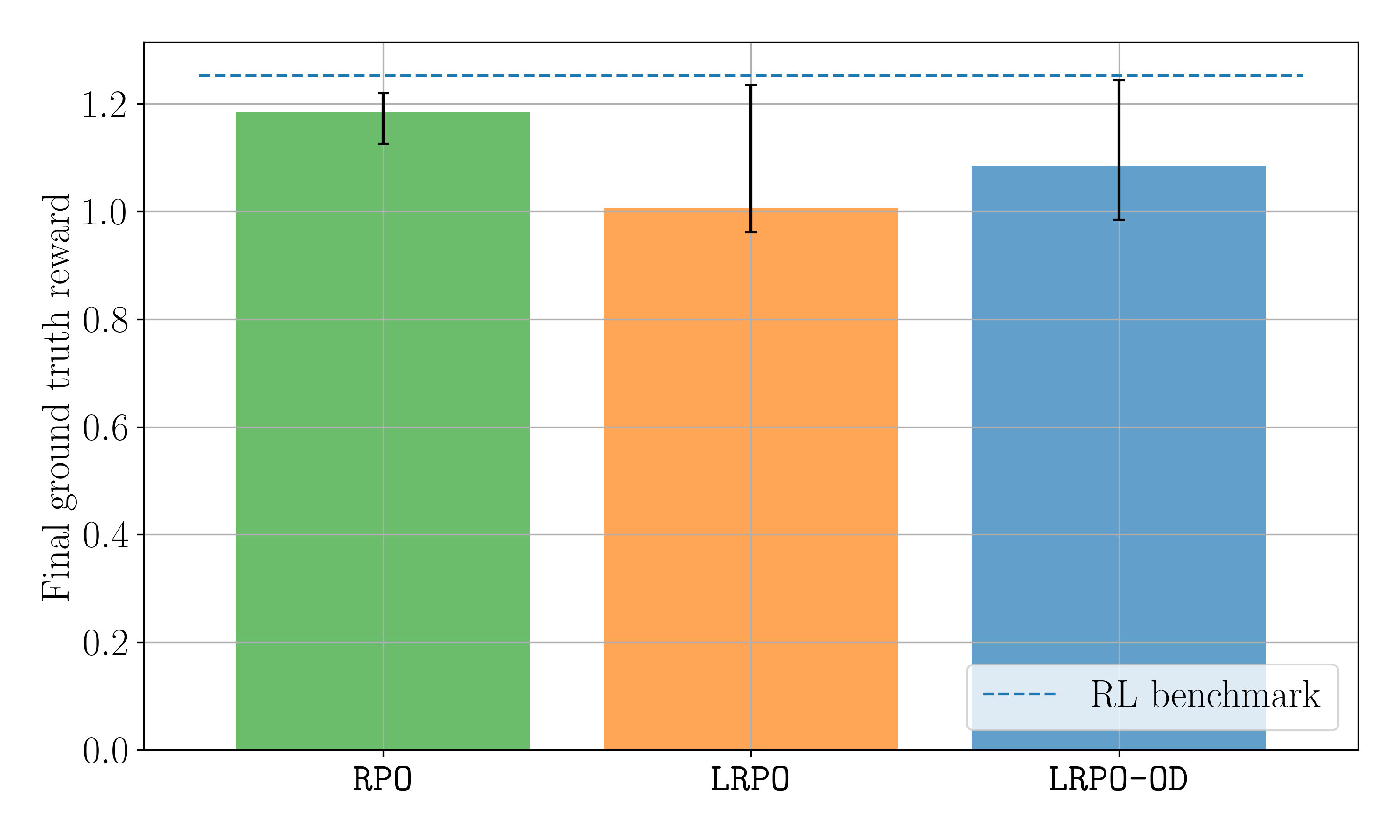}
    \label{fig:ground_truth_reward_pure_exploration}
    
    \vspace{-0.5cm}
    \hspace{0.5cm}(a)
\end{subfigure}
\hfill
\begin{subfigure}[t]{0.48\textwidth}
    \includegraphics[width=\textwidth]{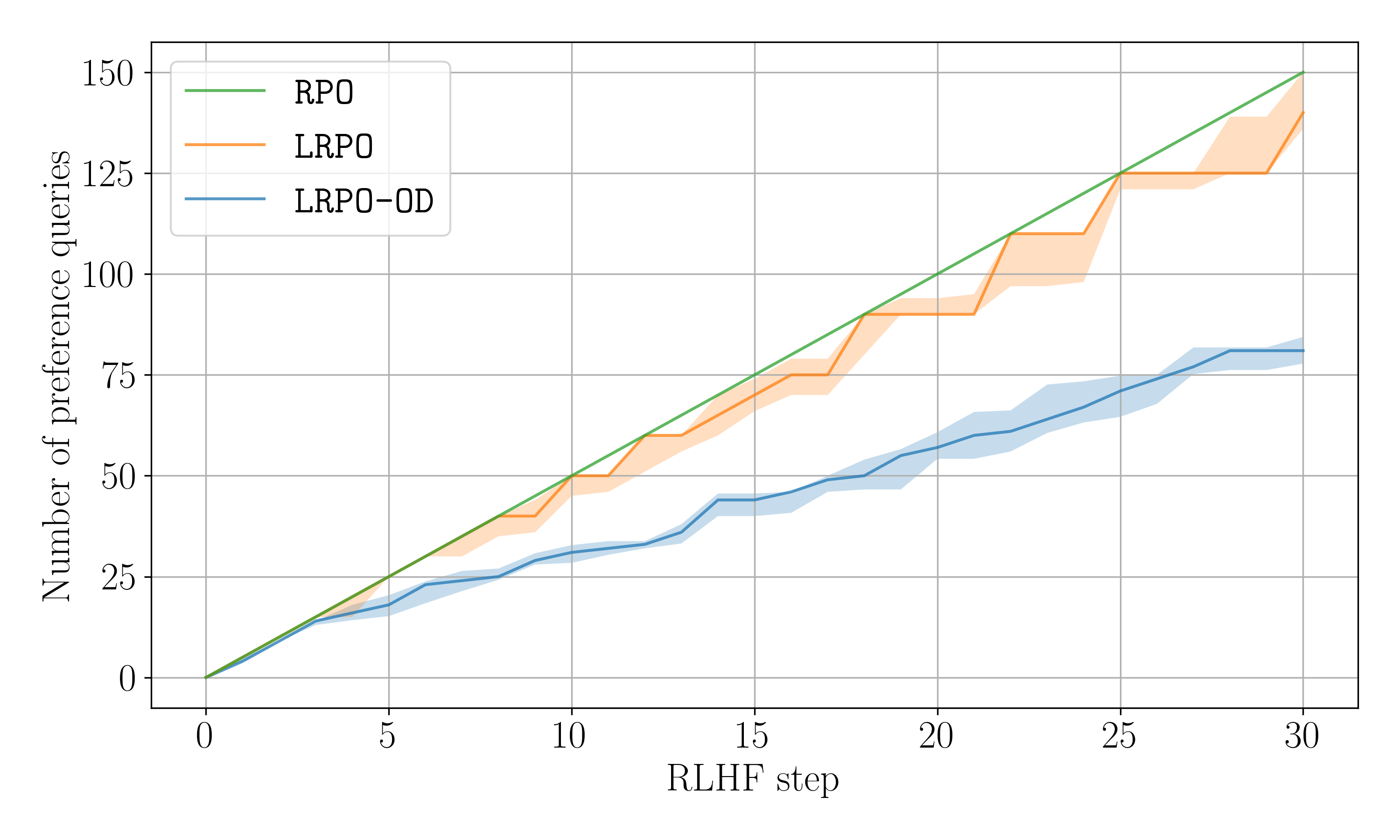}
    \label{fig:num_queries_pure_exploration}
    
    \vspace{-0.5cm}
    \hspace{3.3cm}(b)
\end{subfigure}
\caption{
Comparison of RLHF algorithms in terms of (a) the last iterate ground truth reward $V_{\theta^*}^{\pihat}$ (estimated from samples) and (b) number of preference queries performed. In particular, we compare \texttt{RPO-Explore} (green, Algorithm~\ref{alg:regret}) with its lazy versions \texttt{LRPO-Explore} and \texttt{LRPO-OD-Explore} (orange \& blue). Here, \texttt{LRPO-Explore} and \texttt{LRPO-OD-Explore} refer to Algorithm~\ref{alg:lazy_thompson_sampling} without and with optimal design subroutine. The error bars indicate the 0.2 and 0.8 quantiles, across 10 independent runs. The dashed blue line indicates the mean reward achieved by PPO with the ground truth parameter $\theta^*$. }
\label{fig:experiments_pure_exploration}
\end{figure}

\begin{figure}[h]
\centering
    \includegraphics[width=0.48\textwidth]{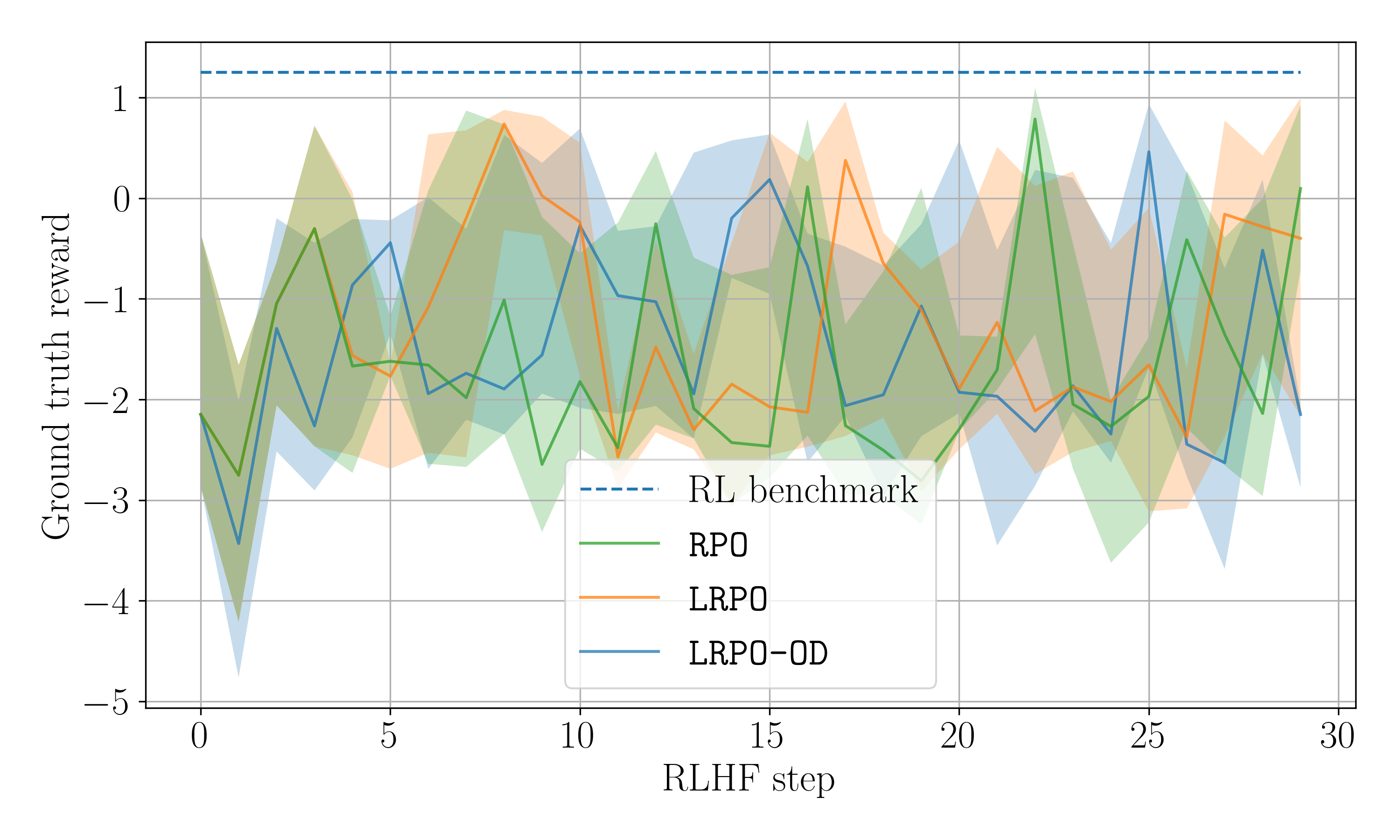}
\caption{Comparing the ground truth rewards $V_{\theta^*}^{\pi_t}$ (estimated from samples) of RLHF algorithms for reward-free exploration during training, using the same color codes as in Figure~\ref{fig:experiments_pure_exploration}.}
\label{fig:experiments_pure_exploration2}
\end{figure}

\newpage
\section*{NeurIPS Paper Checklist}

\begin{enumerate}

\item {\bf Claims}
    \item[] Question: Do the main claims made in the abstract and introduction accurately reflect the paper's contributions and scope?
    \item[] Answer: \answerYes{} 
    \item[] Justification: We provide our theoretical claims in Sections \ref{sec:randomized_preference_optimization} and \ref{sec:practical_algorithm}, we also provide our empirical results in Section \ref{sec:experiments}.
    \item[] Guidelines:
    \begin{itemize}
        \item The answer NA means that the abstract and introduction do not include the claims made in the paper.
        \item The abstract and/or introduction should clearly state the claims made, including the contributions made in the paper and important assumptions and limitations. A No or NA answer to this question will not be perceived well by the reviewers. 
        \item The claims made should match theoretical and experimental results, and reflect how much the results can be expected to generalize to other settings. 
        \item It is fine to include aspirational goals as motivation as long as it is clear that these goals are not attained by the paper. 
    \end{itemize}

\item {\bf Limitations}
    \item[] Question: Does the paper discuss the limitations of the work performed by the authors?
    \item[] Answer: \answerYes{}{} 
    \item[] Justification: We address the limitations of our work in Section \ref{sec:conclusion}.
    \item[] Guidelines:
    \begin{itemize}
        \item The answer NA means that the paper has no limitation while the answer No means that the paper has limitations, but those are not discussed in the paper. 
        \item The authors are encouraged to create a separate "Limitations" section in their paper.
        \item The paper should point out any strong assumptions and how robust the results are to violations of these assumptions (e.g., independence assumptions, noiseless settings, model well-specification, asymptotic approximations only holding locally). The authors should reflect on how these assumptions might be violated in practice and what the implications would be.
        \item The authors should reflect on the scope of the claims made, e.g., if the approach was only tested on a few datasets or with a few runs. In general, empirical results often depend on implicit assumptions, which should be articulated.
        \item The authors should reflect on the factors that influence the performance of the approach. For example, a facial recognition algorithm may perform poorly when image resolution is low or images are taken in low lighting. Or a speech-to-text system might not be used reliably to provide closed captions for online lectures because it fails to handle technical jargon.
        \item The authors should discuss the computational efficiency of the proposed algorithms and how they scale with dataset size.
        \item If applicable, the authors should discuss possible limitations of their approach to address problems of privacy and fairness.
        \item While the authors might fear that complete honesty about limitations might be used by reviewers as grounds for rejection, a worse outcome might be that reviewers discover limitations that aren't acknowledged in the paper. The authors should use their best judgment and recognize that individual actions in favor of transparency play an important role in developing norms that preserve the integrity of the community. Reviewers will be specifically instructed to not penalize honesty concerning limitations.
    \end{itemize}

\item {\bf Theory assumptions and proofs}
    \item[] Question: For each theoretical result, does the paper provide the full set of assumptions and a complete (and correct) proof?
    \item[] Answer: \answerYes{} 
    \item[] Justification: We provide our assumptions in Section \ref{sec:preliminaries} and our proofs in the appendix.
    \item[] Guidelines:
    \begin{itemize}
        \item The answer NA means that the paper does not include theoretical results. 
        \item All the theorems, formulas, and proofs in the paper should be numbered and cross-referenced.
        \item All assumptions should be clearly stated or referenced in the statement of any theorems.
        \item The proofs can either appear in the main paper or the supplemental material, but if they appear in the supplemental material, the authors are encouraged to provide a short proof sketch to provide intuition. 
        \item Inversely, any informal proof provided in the core of the paper should be complemented by formal proofs provided in appendix or supplemental material.
        \item Theorems and Lemmas that the proof relies upon should be properly referenced. 
    \end{itemize}

    \item {\bf Experimental result reproducibility}
    \item[] Question: Does the paper fully disclose all the information needed to reproduce the main experimental results of the paper to the extent that it affects the main claims and/or conclusions of the paper (regardless of whether the code and data are provided or not)?
    \item[] Answer: \answerYes{} 
    \item[] Justification: Our experimental environment will be publicly available, and all experimental details can be found in Appendix~\ref{app:sec:experiments}.
    \item[] Guidelines:
    \begin{itemize}
        \item The answer NA means that the paper does not include experiments.
        \item If the paper includes experiments, a No answer to this question will not be perceived well by the reviewers: Making the paper reproducible is important, regardless of whether the code and data are provided or not.
        \item If the contribution is a dataset and/or model, the authors should describe the steps taken to make their results reproducible or verifiable. 
        \item Depending on the contribution, reproducibility can be accomplished in various ways. For example, if the contribution is a novel architecture, describing the architecture fully might suffice, or if the contribution is a specific model and empirical evaluation, it may be necessary to either make it possible for others to replicate the model with the same dataset, or provide access to the model. In general. releasing code and data is often one good way to accomplish this, but reproducibility can also be provided via detailed instructions for how to replicate the results, access to a hosted model (e.g., in the case of a large language model), releasing of a model checkpoint, or other means that are appropriate to the research performed.
        \item While NeurIPS does not require releasing code, the conference does require all submissions to provide some reasonable avenue for reproducibility, which may depend on the nature of the contribution. For example
        \begin{enumerate}
            \item If the contribution is primarily a new algorithm, the paper should make it clear how to reproduce that algorithm.
            \item If the contribution is primarily a new model architecture, the paper should describe the architecture clearly and fully.
            \item If the contribution is a new model (e.g., a large language model), then there should either be a way to access this model for reproducing the results or a way to reproduce the model (e.g., with an open-source dataset or instructions for how to construct the dataset).
            \item We recognize that reproducibility may be tricky in some cases, in which case authors are welcome to describe the particular way they provide for reproducibility. In the case of closed-source models, it may be that access to the model is limited in some way (e.g., to registered users), but it should be possible for other researchers to have some path to reproducing or verifying the results.
        \end{enumerate}
    \end{itemize}

\item {\bf Open access to data and code}
    \item[] Question: Does the paper provide open access to the data and code, with sufficient instructions to faithfully reproduce the main experimental results, as described in supplemental material?
    \item[] Answer: \answerYes{} 
    \item[] Justification: We provide the code used for our experiment with the supplementary files.
    \item[] Guidelines:
    \begin{itemize}
        \item The answer NA means that paper does not include experiments requiring code.
        \item Please see the NeurIPS code and data submission guidelines (\url{https://nips.cc/public/guides/CodeSubmissionPolicy}) for more details.
        \item While we encourage the release of code and data, we understand that this might not be possible, so “No” is an acceptable answer. Papers cannot be rejected simply for not including code, unless this is central to the contribution (e.g., for a new open-source benchmark).
        \item The instructions should contain the exact command and environment needed to run to reproduce the results. See the NeurIPS code and data submission guidelines (\url{https://nips.cc/public/guides/CodeSubmissionPolicy}) for more details.
        \item The authors should provide instructions on data access and preparation, including how to access the raw data, preprocessed data, intermediate data, and generated data, etc.
        \item The authors should provide scripts to reproduce all experimental results for the new proposed method and baselines. If only a subset of experiments are reproducible, they should state which ones are omitted from the script and why.
        \item At submission time, to preserve anonymity, the authors should release anonymized versions (if applicable).
        \item Providing as much information as possible in supplemental material (appended to the paper) is recommended, but including URLs to data and code is permitted.
    \end{itemize}

\item {\bf Experimental setting/details}
    \item[] Question: Does the paper specify all the training and test details (e.g., data splits, hyperparameters, how they were chosen, type of optimizer, etc.) necessary to understand the results?
    \item[] Answer: \answerYes{} 
    \item[] Justification: The details are provided in Appendix~\ref{app:sec:experiments}.
    \item[] Guidelines:
    \begin{itemize}
        \item The answer NA means that the paper does not include experiments.
        \item The experimental setting should be presented in the core of the paper to a level of detail that is necessary to appreciate the results and make sense of them.
        \item The full details can be provided either with the code, in appendix, or as supplemental material.
    \end{itemize}

\item {\bf Experiment statistical significance}
    \item[] Question: Does the paper report error bars suitably and correctly defined or other appropriate information about the statistical significance of the experiments?
    \item[] Answer: \answerYes{} 
    \item[] Justification: Our experiment section provides shaded areas for the standard deviation of the performance curves over 10 independent training runs. Additional details can be found therein.
    \item[] Guidelines:
    \begin{itemize}
        \item The answer NA means that the paper does not include experiments.
        \item The authors should answer "Yes" if the results are accompanied by error bars, confidence intervals, or statistical significance tests, at least for the experiments that support the main claims of the paper.
        \item The factors of variability that the error bars are capturing should be clearly stated (for example, train/test split, initialization, random drawing of some parameter, or overall run with given experimental conditions).
        \item The method for calculating the error bars should be explained (closed form formula, call to a library function, bootstrap, etc.)
        \item The assumptions made should be given (e.g., Normally distributed errors).
        \item It should be clear whether the error bar is the standard deviation or the standard error of the mean.
        \item It is OK to report 1-sigma error bars, but one should state it. The authors should preferably report a 2-sigma error bar than state that they have a 96\% CI, if the hypothesis of Normality of errors is not verified.
        \item For asymmetric distributions, the authors should be careful not to show in tables or figures symmetric error bars that would yield results that are out of range (e.g. negative error rates).
        \item If error bars are reported in tables or plots, The authors should explain in the text how they were calculated and reference the corresponding figures or tables in the text.
    \end{itemize}

\item {\bf Experiments compute resources}
    \item[] Question: For each experiment, does the paper provide sufficient information on the computer resources (type of compute workers, memory, time of execution) needed to reproduce the experiments?
    \item[] Answer: \answerYes{} 
    \item[] Justification: See appendix~\ref{app:sec:experiments}.
    \item[] Guidelines:
    \begin{itemize}
        \item The answer NA means that the paper does not include experiments.
        \item The paper should indicate the type of compute workers CPU or GPU, internal cluster, or cloud provider, including relevant memory and storage.
        \item The paper should provide the amount of compute required for each of the individual experimental runs as well as estimate the total compute. 
        \item The paper should disclose whether the full research project required more compute than the experiments reported in the paper (e.g., preliminary or failed experiments that didn't make it into the paper). 
    \end{itemize}
    
\item {\bf Code of ethics}
    \item[] Question: Does the research conducted in the paper conform, in every respect, with the NeurIPS Code of Ethics \url{https://neurips.cc/public/EthicsGuidelines}?
    \item[] Answer: \answerYes{} 
    \item[] Justification: We don't see any conflict with the NeurIPS Code of Ethics.
    \item[] Guidelines:
    \begin{itemize}
        \item The answer NA means that the authors have not reviewed the NeurIPS Code of Ethics.
        \item If the authors answer No, they should explain the special circumstances that require a deviation from the Code of Ethics.
        \item The authors should make sure to preserve anonymity (e.g., if there is a special consideration due to laws or regulations in their jurisdiction).
    \end{itemize}

\item {\bf Broader impacts}
    \item[] Question: Does the paper discuss both potential positive societal impacts and negative societal impacts of the work performed?
    \item[] Answer: \answerNA{} 
    \item[] Justification: We do not foresee any negative societal impact for our algorithms.
    \item[] Guidelines:
    \begin{itemize}
        \item The answer NA means that there is no societal impact of the work performed.
        \item If the authors answer NA or No, they should explain why their work has no societal impact or why the paper does not address societal impact.
        \item Examples of negative societal impacts include potential malicious or unintended uses (e.g., disinformation, generating fake profiles, surveillance), fairness considerations (e.g., deployment of technologies that could make decisions that unfairly impact specific groups), privacy considerations, and security considerations.
        \item The conference expects that many papers will be foundational research and not tied to particular applications, let alone deployments. However, if there is a direct path to any negative applications, the authors should point it out. For example, it is legitimate to point out that an improvement in the quality of generative models could be used to generate deepfakes for disinformation. On the other hand, it is not needed to point out that a generic algorithm for optimizing neural networks could enable people to train models that generate Deepfakes faster.
        \item The authors should consider possible harms that could arise when the technology is being used as intended and functioning correctly, harms that could arise when the technology is being used as intended but gives incorrect results, and harms following from (intentional or unintentional) misuse of the technology.
        \item If there are negative societal impacts, the authors could also discuss possible mitigation strategies (e.g., gated release of models, providing defenses in addition to attacks, mechanisms for monitoring misuse, mechanisms to monitor how a system learns from feedback over time, improving the efficiency and accessibility of ML).
    \end{itemize}
    
\item {\bf Safeguards}
    \item[] Question: Does the paper describe safeguards that have been put in place for responsible release of data or models that have a high risk for misuse (e.g., pretrained language models, image generators, or scraped datasets)?
    \item[] Answer: \answerNA{} 
    \item[] Justification: We do not release models or datasets in this paper.
    \item[] Guidelines:
    \begin{itemize}
        \item The answer NA means that the paper poses no such risks.
        \item Released models that have a high risk for misuse or dual-use should be released with necessary safeguards to allow for controlled use of the model, for example by requiring that users adhere to usage guidelines or restrictions to access the model or implementing safety filters. 
        \item Datasets that have been scraped from the Internet could pose safety risks. The authors should describe how they avoided releasing unsafe images.
        \item We recognize that providing effective safeguards is challenging, and many papers do not require this, but we encourage authors to take this into account and make a best faith effort.
    \end{itemize}

\item {\bf Licenses for existing assets}
    \item[] Question: Are the creators or original owners of assets (e.g., code, data, models), used in the paper, properly credited and are the license and terms of use explicitly mentioned and properly respected?
    \item[] Answer: \answerYes{} 
    \item[] Justification: We are citing and crediting the Isaac Lab creators for their RL environment.
    \item[] Guidelines:
    \begin{itemize}
        \item The answer NA means that the paper does not use existing assets.
        \item The authors should cite the original paper that produced the code package or dataset.
        \item The authors should state which version of the asset is used and, if possible, include a URL.
        \item The name of the license (e.g., CC-BY 4.0) should be included for each asset.
        \item For scraped data from a particular source (e.g., website), the copyright and terms of service of that source should be provided.
        \item If assets are released, the license, copyright information, and terms of use in the package should be provided. For popular datasets, \url{paperswithcode.com/datasets} has curated licenses for some datasets. Their licensing guide can help determine the license of a dataset.
        \item For existing datasets that are re-packaged, both the original license and the license of the derived asset (if it has changed) should be provided.
        \item If this information is not available online, the authors are encouraged to reach out to the asset's creators.
    \end{itemize}

\item {\bf New assets}
    \item[] Question: Are new assets introduced in the paper well documented and is the documentation provided alongside the assets?
    \item[] Answer: \answerYes{}  
    \item[] Justification: Instructions for reproducing the experiments are provided in the readme.
    \item[] Guidelines:
    \begin{itemize}
        \item The answer NA means that the paper does not release new assets.
        \item Researchers should communicate the details of the dataset/code/model as part of their submissions via structured templates. This includes details about training, license, limitations, etc. 
        \item The paper should discuss whether and how consent was obtained from people whose asset is used.
        \item At submission time, remember to anonymize your assets (if applicable). You can either create an anonymized URL or include an anonymized zip file.
    \end{itemize}

\item {\bf Crowdsourcing and research with human subjects}
    \item[] Question: For crowdsourcing experiments and research with human subjects, does the paper include the full text of instructions given to participants and screenshots, if applicable, as well as details about compensation (if any)? 
    \item[] Answer: \answerNA{} 
    \item[] Justification: Our research is not involving any experiments with humans.
    \item[] Guidelines:
    \begin{itemize}
        \item The answer NA means that the paper does not involve crowdsourcing nor research with human subjects.
        \item Including this information in the supplemental material is fine, but if the main contribution of the paper involves human subjects, then as much detail as possible should be included in the main paper. 
        \item According to the NeurIPS Code of Ethics, workers involved in data collection, curation, or other labor should be paid at least the minimum wage in the country of the data collector. 
    \end{itemize}

\item {\bf Institutional review board (IRB) approvals or equivalent for research with human subjects}
    \item[] Question: Does the paper describe potential risks incurred by study participants, whether such risks were disclosed to the subjects, and whether Institutional Review Board (IRB) approvals (or an equivalent approval/review based on the requirements of your country or institution) were obtained?
    \item[] Answer: \answerNA{} 
    \item[] Justification: Our research is not involving any experiments with humans.
    \item[] Guidelines:
    \begin{itemize}
        \item The answer NA means that the paper does not involve crowdsourcing nor research with human subjects.
        \item Depending on the country in which research is conducted, IRB approval (or equivalent) may be required for any human subjects research. If you obtained IRB approval, you should clearly state this in the paper. 
        \item We recognize that the procedures for this may vary significantly between institutions and locations, and we expect authors to adhere to the NeurIPS Code of Ethics and the guidelines for their institution. 
        \item For initial submissions, do not include any information that would break anonymity (if applicable), such as the institution conducting the review.
    \end{itemize}

\item {\bf Declaration of LLM usage}
    \item[] Question: Does the paper describe the usage of LLMs if it is an important, original, or non-standard component of the core methods in this research? Note that if the LLM is used only for writing, editing, or formatting purposes and does not impact the core methodology, scientific rigorousness, or originality of the research, declaration is not required.
    \item[] Answer: \answerNo{} 
    \item[] Justification: We only use LLMs to improve the writing quality.
    \item[] Guidelines:
    \begin{itemize}
        \item The answer NA means that the core method development in this research does not involve LLMs as any important, original, or non-standard components.
        \item Please refer to our LLM policy (\url{https://neurips.cc/Conferences/2025/LLM}) for what should or should not be described.
    \end{itemize}

\end{enumerate}

\end{document}